\newtheorem{theorem}{Theorem}[section]
\newtheorem{lemma}[theorem]{Lemma}
\newtheorem{definition}[theorem]{Definition}
\newtheorem{claim}[theorem]{Claim}
\title{Compositional Reasoning with Transformers, RNNs, and Chain of Thought}
\newcommand{\reals}{\mathbb{R}}
\newcommand{\vertiii}[1]{{\left\vert\kern-0.25ex\left\vert\kern-0.25ex\left\vert #1 
    \right\vert\kern-0.25ex\right\vert\kern-0.25ex\right\vert}}
\newcommand{\beq}{\begin{eqnarray*}}
\newcommand{\eeq}{\end{eqnarray*}}
\newcommand{\beqn}{\begin{eqnarray}}
\newcommand{\eeqn}{\end{eqnarray}}
\newcommand{\ent}[1][]{%
\ifthenelse{\isempty{#1}}{%
\mathrm{H}
}{
\mathrm{H}^{(#1)}
}}
\newcommand{\loch}[1][]{%
\ifthenelse{\isempty{#1}}{%
\mathrm{h}
}{
\mathrm{h}^{(#1)}
}}
\newcommand{\Acal}{\mathcal{A}}
\newcommand{\Ncal}{\mathcal{N}}
\newcommand{\be}{\mathbf{e}}
\newcommand{\bx}{\mathbf{x}}
\newcommand{\bt}{\mathbf{t}}
\newcommand{\bo}{\mathbf{o}}
\newcommand{\bw}{\mathbf{w}}
\newcommand{\bv}{\mathbf{v}}
\newcommand{\bz}{\mathbf{z}}
\newcommand{\bh}{\mathbf{h}}
\newcommand{\by}{\mathbf{y}}
\newcommand{\Ccal}{\mathcal{C}}
\newcommand{\Pcal}{\mathcal{P}}
\newcommand{\inner}[1]{\left\langle#1\right\rangle}
\newcommand{\norm}[1]{\left\lVert#1\right\rVert}
\DeclareMathOperator*{\argmax}{arg\,max}
\newcommand{\naturals}{\mathbb{N}}
\newcommand{\mr}{\textbf{mr}}
\newcommand{\pr}{\text{Pr}}
\author{%
Gilad Yehudai \\
  Courant Institute of Mathematical Sciences\\ New York University\\
  \texttt{gy2209@nyu.edu}
  \And 
  Noah Amsel\\  Courant Institute of Mathematical Sciences\\ New York University\\ \texttt{noah.amsel@nyu.edu}
  \And 
  Joan Bruna\\Courant Institute of Mathematical Sciences, \\
  \& Center for Data Science, New York University \\ Center for Computational Mathematics, Flatiron Institute \\ \texttt{jb4496@nyu.edu}
}
\newcommand{\blue}[1]{\textcolor{black}{#1}}
\begin{document}

\maketitle

\begin{abstract}
It is well understood that different neural network architectures are suited to different tasks, but is there always a single best architecture for a given task?
We compare the expressive power of transformers, RNNs, and transformers with chain of thought tokens on a simple and natural class of tasks we term Compositional Reasoning Questions (CRQ).
This family captures multi-step problems with tree-like compositional structure, such as evaluating Boolean formulas.
We prove that under standard hardness assumptions, \emph{none} of these three architectures is capable of solving CRQs unless some hyperparameter (depth, embedding dimension, and number of chain of thought tokens, respectively) grows with the size of the input.
We then provide constructions for solving CRQs with each architecture. For transformers, our construction uses depth that is logarithmic in the problem size. For RNNs, logarithmic embedding dimension is necessary and sufficient, so long as the inputs are provided in a certain order. For transformers with chain of thought, our construction uses $n$ CoT tokens for input size $n$.
These results show that, while CRQs are inherently hard, there are several different ways for language models to overcome this hardness. Even for a single class of problems, each architecture has strengths and weaknesses, and none is strictly better than the others.

\end{abstract}

\section{Introduction}
Large language models \citep{touvron2023llama,team2023gemini,achiam2023gpt} are increasingly used to perform logical reasoning and other problems that require algorithmic thinking.
To understand the power and limitations of these models, it is essential to determine what kinds of computational problems they are capable of solving.
To this end, a long line of theoretical work has studied the expressive power of various language modeling architectures using simple tasks like copying strings, recognizing formal languages, and determining if a graph is connected \citep{jelassi2024repeat,sanford2024transformers,10.1162/tacl_a_00663}.
Such studies typically provide constructions or prove impossibility results for a particular architecture or paradigm---such as recurrent neural networks, transformers, and transformers with test-time scaling (``chain of thought'')---following the progress of the field.
\footnote{\blue{In this paper, we consider transformers with and without chain of thought to be distinct architectures. We also distinguish between ``shallow'' (constant number of layers compared to the input size) and ``deep'' transformers. While these are all types of transformer, our results show that they are qualitatively very different.}}

In this paper, we take a broader view. We compare the abilities of several different language modeling architectures to solve a class of problems that we call Compositional Reasoning Questions (CRQs).
While it is intuitive that each architecture has strengths and weaknesses, a rigorous understanding of this phenomenon is incomplete.
Theoretical comparisons between architectures often use tasks that are selected to make one model look good.
For instance, copying an arbitrarily long string is impossible for an RNN with fixed memory, but easy for a transformer \citep{jelassi2024repeat}.
In this work, we prove that the relationships between model classes can be much subtler and more interesting.
Our study of CRQs prove that there are unavoidable trade-offs between the architectures we consider, even for a single problem.
None of these architectures can solve CRQs without a strong dependence on the size of the input.
However, each architecture allows us to minimize the use of some computational resource compared to the others (see \Cref{tab:summary}).
CRQs are simultaneously hard in different ways for different architectures, but not intractable for any one of them.
Thus, the CRQ task provides a crisp and formal way to characterize the \emph{essential} differences between these architectures.


\blue{
Our motivation for defining and studying Compositional Reasoning Questions is as follows.
Many multi-step reasoning tasks share a common tree-like structure, meaning that they are hierarchical compositions of smaller tasks.
Consider the following examples:
\begin{itemize}
\vspace{-0.1in}
\item \emph{What is $(6+2)\cdot(4-5)$?}
\vspace{-0.05in}
\item \emph{Among all U.S. states, which one's highest mountain has the lowest elevation?}
\vspace{-0.1in}
\end{itemize}
To answer these questions, we must first solve several sub-tasks, like ``\emph{What is $6+2$?}'' and ``\emph{What is the elevation of the highest point in Hawaii?}''
While some of these sub-tasks can be solved in parallel, the overall task cannot be solved without first gathering the answers to the sub-tasks.
Furthermore, sub-tasks may be nested hierarchically to form more complex tree structures, as in $5 \cdot (8 - (9 \div (2 - 1)))$.
Each sub-task corresponds to a non-leaf node of the tree, and the answer to the overall questions corresponds to the root node.
See \Cref{fig:CRQexample} for an example tree.
The paradigmatic example of compositional reasoning is Boolean formula evaluation, which is the problem of determining the truth-value of an expression like $(\mathtt{T} \lor \mathtt{F}) \land (\neg (\mathtt{F} \land \mathtt{T}))$.
This is a problem of fundamental importance in basic logic and in complexity theory, where it is one of the key $NC^{1}$-complete problems, but it has been surprisingly understudied in prior work on LLM reasoning.
Various other tasks studied in the literature also share this hierarchical structure \citep{sinha-etal-2019-clutrr, feng2024towards}.
Compositional Reasoning Questions, which we define in \Cref{subsec:crq}, provide a simple and unified framework for studying reasoning tasks that exhibit compositionality.
In particular, we prove that the CRQ task captures Boolean formula evaluation.
Therefore, we believe that CRQs are of fundamental importance as a yardstick for LLM reasoning.
}


\begin{table*}[t]
\caption{Comparison between our methods for solving CRQs on $n$ nodes. Each architecture minimizes one kind of resource at the expense of the others. For ease of comparison, we assume the depth of the CRQ tree is $\log n$. By parallel runtime, we mean the runtime using an unlimited number of parallel processors.}
\label{tab:summary}
\begin{center}
\begin{tabular}{l | c c c}
\toprule
Architecture & Num. Parameters & Runtime & Parallel Runtime\\
\midrule
Deep transformer (\Cref{sec:deep transformers}) & \cellcolor{red!25} $O(L\, \log^2 n)$ & \cellcolor{red!25} $O(L\, n^2 \, \log n)$ & \cellcolor{green!25} $O(L)$ \\ 
RNN (\Cref{sec:rnn}) & \cellcolor{red!25} $[O(\log n) ,\, O(n)]$\footnotemark
&  \cellcolor{green!25} $[O(n \log n),  \, O(n^2)]$ & \cellcolor{red!25} $O(n)$ \\  
Chain of Thought (\Cref{sec:cot solves crq}) &  \cellcolor{green!25} $O(\log^2 n)$ &\cellcolor{red!25}  $O(n^2\, \log^2 n)$ & \cellcolor{red!25} $O(n)$ \\
\bottomrule
\end{tabular}
\end{center}
\end{table*}

\footnotetext{Depending on the ordering of the inputs, see more details in \cref{sec:rnn}.}


We study the abilities of deep transformers, recurrent neural networks, and shallow transformers with chain of thought to solve these problems. Our main contributions are as follows:
\begin{enumerate}
    \item In \Cref{subsec:crq}, we present Compositional Reasoning Questions, a simple, formal framework based on semantic similarity for studying LLM reasoning on arbitrary tree structures.
    \item In \Cref{sec:deep transformers}, we prove that transformers with constant depth cannot solve arbitrary CRQs (\Cref{thm:crq not tc0}), but transformers with depth $L$ can solve all CRQs of depth up to $L$ (\Cref{thm:deep transformer}).
    \item In \Cref{sec:rnn}, we prove that RNNs with constant hidden dimension cannot solve arbitrary CRQs (\Cref{thm:mr lower bound}), but RNNs with $O(\log n)$ hidden dimension and constant depth can solve all CRQs of size $n$ (\Cref{thm:rnns solves crq}). This ability depends on the inputs being arranged in a particular order (\Cref{alg:mr sort}); if they are ordered adversarially, RNNs require $O(n)$ hidden dimension (\Cref{thm:rnn bad order}).
    \item In \Cref{sec:cot solves crq}, we prove that transformers augmented with $O(\log n)$ CoT tokens cannot solve CRQs of size $n$, but transformers augmented with $O(n)$ CoT tokens can (\Cref{thm:cot solves crq}).
\end{enumerate}
Each of these results fills a gap in the literature; taken together, they demonstrate the fundamental trade-offs between different models (\Cref{tab:summary}).
While deep transformers are highly parallelizable, they require $O(\log n)$ depth in the worst case, so the model size must depend (albeit mildly) on the problem size. Likewise, RNNs use little compute but must grow to handle larger problems, although their success depends on the order of the inputs. Chain of thought allows a single, logarithmic-size model to handle any CRQ, but it runs slowly and is not parallelizable.
Overall, our work reveals a rich complexity landscape for an important class of reasoning problems.

\begin{wrapfigure}[22]{R}{0.5\textwidth}
\begin{center}
\resizebox{200pt}{124pt}{
\begin{tikzpicture}[level distance=20mm,level 1/.style={sibling distance=50mm},level 2/.style={sibling distance=16mm}]
  \node[draw, circle] (root) {$\begin{bmatrix}1\\1\end{bmatrix}$}
    child { node[draw, circle] (123){$\begin{bmatrix}1\\0\end{bmatrix}$}
      child { node[draw, circle] (111) {$\begin{bmatrix}7\\6\end{bmatrix}$} }
      child { node[draw, circle] {$\begin{bmatrix}2\\3\end{bmatrix}$} }
      child { node[draw, circle] {$\begin{bmatrix}4\\6\end{bmatrix}$} }
    }
    child { node[draw, circle] (cc) {$\begin{bmatrix}0\\-1\end{bmatrix}$}
      child { node[draw, circle] {$\begin{bmatrix}2\\8\end{bmatrix}$}
      }
      child { node[draw, circle] (bb){$\begin{bmatrix}3\\4\end{bmatrix}$}
      }
    };
    \draw[->, thick, color=red] (111.north) -- (123.west);
    \draw[->, thick, color=red] (bb.north) -- (cc.east);
    \draw[->, thick, color=red] (123.north) -- (root.west);
    \node[color=red,left=3mm of 123] {$\begin{bmatrix}7\\6\end{bmatrix}$};
    \node[color=red,right=2mm of cc] {$\begin{bmatrix}3\\4\end{bmatrix}$};
    \node[color=red,left=5mm of root] {$\begin{bmatrix}7\\6\end{bmatrix}$};
\end{tikzpicture}
}
\end{center}
\caption{Example of a compositional reasoning question. Inside each node is the vector corresponding to this node. Red lines indicate the answer of each sub-question, given by $\arg\max_{u\in \Ccal(v)}\{\bx_v,\bx_u\}$, where $\Ccal(v)$ are the children of node $v$.}
\label{fig:CRQexample}
\end{wrapfigure}
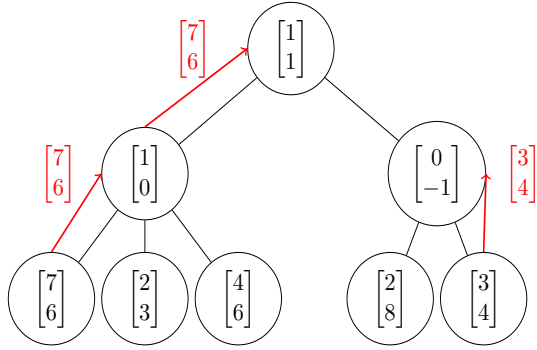

\section{Related Work}

\paragraph{Expressive power of transformers.}

Our work belongs to a large body of research studying the representational capacity of transformers, chain of thought prompting and RNNs. Transformers with unbounded depth are known to be universal approximators \citep{yun2019transformers}, and can simulate Turing machines \citep{wei2022statistically,merrill2023expresssive} if their size can grow with the sequence length. Looped transformers, which repeat a fixed sequence of transformer layers a variable number of times depending on the length of the input, can implement a simple but universal programming language \citep{pmlr-v202-giannou23a}.
Several works studied the expressive power of shallow transformers in solving certain representative tasks. \citet{sanford2024representational} study problems like sparse averaging and matching, \citet{yehudai2024can} study counting, and \citet{amsel2024benefits} study nearest neighbor.
A long line of work
\citep{hahn2020theoretical,hao2022formal,merrill2022saturated,10.1162/tacl_a_00663} has used circuit complexity classes to characterize the power and limitations of transformers.
\citet{merrill2023parallelism} show that constant-depth transformers can only compute functions that are computable by Boolean circuits in the complexity class $\TC^0$. We use this result to prove that they cannot solve CRQs either (\Cref{thm:crq not tc0}).
Finally, \citet{chen2024theoreticallimitationsmultilayertransformer} study decoder-only transformers, in which each token can attend only to the previous tokens in the sequence, and prove that $L$-layer decoders are strictly weaker than $L+1$-layer decoders for the task of function composition.

Past work has also studied the expressive power of transformers with logarithmic-depth.
\citep{sanford2024understanding,sanford2024transformers} focus on graph tasks, showing how the parallelism of transformers gives them an advantage over other architectures.
\citet{liu2022transformers} show the ability of logarithmic-depth transformers to simulate finite automata, and \citet{merrill2024little} extend these results by weakening their assumptions. We note that as implied by our \Cref{thm:crq nc1 hard}, CRQs in general cannot be solved by finite-state automata.

\paragraph{Power of RNNs} An older line of research studied the capacities of recurrent networks, usually by relating them to automata \citep{merrill-2019-sequential,korsky2019computational,merrill-etal-2020-formal}. However, they generally assume that the order of the inputs is fixed and that the size of the network is constant in the size of the input. We study models whose hyperparameters (size, number of CoT tokens) change with the problem size, and we study the effect of both benevolent and adversarial orderings of the inputs.

\paragraph{Chain of Thought prompting.}

Chain of thought (CoT) \citep{reynolds2021prompt,wei2022chain,nye2021show} is a method that enhances the ability of transformers to solve logical reasoning tasks. Rather than producing the answer all at once, the model is allowed to autoregressively generate a series of intermediate tokens to help it carry out each step in the solution. \citet{li2024chain} show that constant-depth transformers that generate $T$ CoT tokens can simulate Boolean circuits of size $T$. \citet{merrill2023expresssive} prove they can simulate $O(T)$ steps of a Turing machine.

Most closely related to our work is that of \citet{feng2024towards}, which studies the power of chain of thought in solving arithmetic problems.
Like CRQs, arithmetic expressions correspond to tree of subproblems and are solved by working up the tree.
Unlike their work, in which the tree structure is indicated by parentheses in the sequence, we encode the tree structure into the positional encodings of the tokens. However, both problems are $\NC^1$-hard.
Like \citet{feng2024towards}, we prove that our task is solvable by a constant depth transformer using chain of thought, but not otherwise.
However, there are several differences: (i) We provide an explicit construction for solving CRQs using \emph{logarithmic} depth transformers without chain of thought. (ii) While their task is unsolvable by RNNs with a hidden dimension of $o(n/\log n)$, we provide an explicit construction of an RNN that solves CRQs using only $O(\log n)$, but does not work for all orderings of the input.
(iii) Our chain of thought solution generates $n$ tokens, while theirs requires $\Omega(n^2)$ tokens.
Such different conclusions emphasize the importance of the input format; in particular our more favorable scalings hint at the importance of leveraging prior information about the hierarchical structure. 
In  \Cref{app:general crq} we generalize our definition of CRQ to also capture arithmetic operations, as studied in \cite{feng2024towards}.



\section{Problem Formulation and Preliminaries}

\paragraph{Notation.} We use bold letters for vectors, e.g. $\bx,\by$. Let $(\bx)_{i:j}\in\reals^{j-i+1}$ denote entries $i$ through $j$ of $\bx$, and $(\bx)_{-1}$ its last entry. For $n \in \mathbb N, [n] = \{1,\dots,n\}$.
Let $T=(V,E)$ be a rooted tree with root $v_r\in V$. The \emph{depth of a node} $v\in V$ is the length of the unique path from $v\in V$ to $v_r$. The \emph{depth of the tree} is the largest depth of its nodes. The \emph{parent} of a node $v\in V$ is the node connected to $v$ on the path to the root and denoted by $\Pcal(v)$. The \emph{children} of $v$ are the nodes whose parent is $v$, and are denoted by $\Ccal(v)$. The \emph{degree} of a node is the number of its children. Thus, leaves have a degree of $0$. We say that a node $u$ is a \emph{descendant} of $v$ if there is $i$ such that $\Pcal^{(i)}(u) = v$, where $\Pcal^{(i)}$ is the parent function composed with itself $i$ times. We also say in this case that $v$ is an \emph{ancestor} of $u$.

\subsection{Compositional Reasoning Questions} \label{subsec:crq}

We formally define the class of Compositional Reasoning Questions as follows: 

\begin{definition}\label{def:crq}
    A \textbf{Compositional Reasoning Question (CRQ)} is a rooted tree $T = (V,E)$ with root $v_r\in V$, where each node in the tree $v\in V$ is labeled by a vector $\bx_v\in \Gamma^d$. Here $\Gamma\subset \reals$ is some finite vocabulary of constant size and $d\in\naturals$. 

    The \textbf{size} of the CRQ is defined as $|V|$. The \textbf{answer} to the CRQ, denoted as $\Acal(v_r)$, is defined recursively: For a leaf $u\in V$ we define $\Acal(u) = \bx_u$. For a non-leaf node $v\in V$ we define:
    \[
    \Acal(v) = \argmax_{\bx_u,u\in \Ccal(v)} \inner{\Acal(u), \bx_v}~.
    \]
    We refer to every node $v\in V$ which is not a leaf as a \textbf{sub-question}.
\end{definition}
In \Cref{app:proof:thm:crq nc1 hard}, we prove that Boolean formula evaluation is a special case of CRQ. Intuitively, the leaves represent Boolean literals (true and false), the sub-questions represent logical operations (and, or, etc.), the tree topology encodes the order of operations, and the answer is the truth value of the overall expression.
Using vector representations is not only more general, but also makes for simple constructions using language modeling architectures like attention.

In the above definition we assume that the CRQs are defined over some finite vocabulary $\Gamma$. We can think of it as $\Gamma=\{0, \pm 1,\dots,\pm 9\}$ for simplicity, although any other vocabulary is allowed too.

CRQs are thus inherently hierarchical tasks, where the usual sequential structure is replaced by a tree structure. 
Therefore, we will consider learning CRQs with models that can leverage this tree structure.
Each node $\bv$ is embedded as the vector $(\bx_v, \bz_v, \bz_{\Pcal(v)}, \ell(v))$, where $\bx_v$ is the label, $\bz_v$ is a positional embedding, and $\ell(v)$ is the depth of $v$ in the tree.


\subsection{Transformers}

We now provide a formal definition of the transformer architecture that will be used throughout the paper.
The input sequence is $\bx_1,\dots,\bx_n\in\reals^d$.
We concatenate \textbf{Positional Encodings (PE)} $\be_1,\dots,\be_n\in\reals^e$ to the inputs.
The positional encodings cannot depend on the values of the vectors $\bx_i$, only on their positions (namely, on $i$). The input to the transformer is the sequence $\tilde{\bx}_i = \begin{pmatrix}
    \bx_i \\ \be_i
\end{pmatrix}\in\reals^{d+e}$.
The transformer backbone contains alternating layers of hardmax, single-head self-attention and feed forward networks with skip connections.
For simplicity, we do not use layer normalization or attention masking (though the lower bounds like \Cref{thm:crq not tc0} generally hold against models that include them).
Formally, let $\bm{h}_i^{(\ell)}$ denote the hidden embedding output after $\ell$ transformer blocks corresponding to the $i$th input. Define:
\begin{align}
\bm{h}_i^{(0)} &= \tilde{\bx}_i \\
\bm{h}_i^{(\ell+1/2)} &= \bm{h}_i^{(\ell)} + \bm{V}_\ell \argmax_{\bm{h} \in \{\bm{h}_1^{(\ell)}, \ldots, \bm{h}_n^{(\ell)}\}} \left({\bm{h}}^\top \bm{K}_\ell^\top\bm{Q}_\ell\bm{h}_i^{(\ell)}\right) \\
\bm{h}_i^{(\ell+1)} &= \mathrm{MLP}_{\theta_\ell}\left(\bm{h}_i^{(\ell+1/2)}\right)
\end{align}
where $\bm{V}_\ell, \bm{K}_\ell, \bm{Q}_\ell \in \reals^{(d+e)\times (d+e)}$ and $\theta_\ell$ are weight matrices and $\sigma(\cdot)$ is the ReLU function. Ties in the $\argmax$ are broken arbitrarily.
Finally, an unembedding layer $\by_i = \begin{pmatrix}\bm{I}_d & \cdot \\ \cdot & \cdot\end{pmatrix} \bm{h}_i^{(L)}$ discards entries corresponding to the positional encoding.
We consider $\by_n$ to be the output of the model.
Following previous work \citep{merrill2023a,merrill2023parallelism}, we assume that the inputs, weights, and intermediate representations of the network can all be represented using $O(\log n)$ bits of precision. \blue{We use hardmax attention for ease of analysis, following previous works on the theory of transformers. However, all our constructions can be extended to softmax.
To do so, we would modify the statements of \Cref{thm:deep transformer,thm:cot solves crq} to say that our constructions approximate the target function up to arbitrary error $\epsilon > 0$ that is independent of $n$, rather than exactly expressing the target.
The proofs would then tune the temperature of the softmax function so that it is sufficiently close to hardmax to achieve approximation error $\epsilon$.
}

\section{Depth in Transformers is Necessary and Sufficient}\label{sec:deep transformers}

In this section we provide two complementary results about the power of transformers to solve CRQs. We first show that deep transformers can solve any CRQ so long as the depth of the transformer is at least the depth of the CRQ tree. We then prove a conditional lower bound showing that constant depth transformers cannot solve all CRQs. Combining both results, we conclude that depth is both necessary and sufficient for transformers to solve CRQs.

\subsection{Deep transformers can solve CRQs}


\begin{theorem}\label{thm:deep transformer}
    For any $L,n\in \naturals$ there exists a transformer $T$ with depth $L-1$ and embedding dimension $O(d+\log(n))$ that solves all CRQs with at most $n$ nodes and depth at most $L$.
\end{theorem}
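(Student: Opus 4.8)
The plan is to build the transformer layer by layer so that after $\ell$ rounds of attention, every node $v$ whose subtree has been ``resolved'' up to depth $\ell$ from the leaves carries its answer $\Acal(v)$ in a designated block of its hidden state. Concretely, I would maintain in $\bm h_v^{(\ell)}$ the fields $(\bx_v, \bz_v, \bz_{\Pcal(v)}, \ell(v))$ together with a ``current best child value'' field and a ``current answer'' field. The key invariant is: after $\ell$ layers, for every node $v$ at height $\le \ell$ in the tree (height measured from the leaves), the current-answer field equals $\Acal(v)$; for all other nodes it holds something harmless (e.g.\ its own label or a partial aggregate). Since the tree has depth at most $L$, after $L-1$ layers the root—whose height is at most $L-1$ once we note the root sits at depth $0$—has its answer computed, and the unembedding reads it off from $\by_n$ (placing the root as the last token, or routing the root's answer to position $n$ via one final attention step).

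The single repeated step is: given that each already-resolved child $u$ of $v$ stores $\Acal(u)$, have node $v$ select $\argmax_{u\in\Ccal(v)}\inner{\Acal(u),\bx_v}$. I would implement this with one attention head per layer as follows. First, using the positional-encoding fields, each node $u$ ``broadcasts'' to its parent: set the key of $u$ to encode $\bz_{\Pcal(u)}$ and the value of $u$ to encode $(\Acal(u)\text{-field so far}, \bz_u)$, and the query of $v$ to encode $\bz_v$; then $\bm h^\top \bm K^\top \bm Q \bm h_v$ is maximized exactly at children of $v$. But a single hardmax head returns only one child, not the $\argmax$ over $\inner{\Acal(u),\bx_v}$. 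The standard fix is to fold the inner product into the attention score: design the query/key so that the score of node $u$ with respect to $v$ equals $C\cdot\one{\Pcal(u)=v} + \inner{\Acal(u),\bx_v}$ for a large constant $C$ (and something strictly smaller than $C$-offset for non-children). Because $\Gamma$ is a finite vocabulary of constant size and $d$ is fixed, $\inner{\Acal(u),\bx_v}$ ranges over a bounded discrete set, so choosing $C$ larger than this range guarantees the hardmax picks precisely the maximizing child. Encoding ``$\one{\Pcal(u)=v}$ times a large constant, plus a bilinear form in $(\Acal(u),\bx_v)$'' as a single quadratic form $\bm h_u^\top \bm K^\top\bm Q\bm h_v$ is where I'd spend care: it amounts to writing a block matrix whose off-diagonal block is the identity on the $(\Acal\text{-field}, \bx)$ coordinates and whose other blocks implement the positional-matching indicator (using an inner product of positional codes that is maximal iff $\bz_u=\bz_{\Pcal(u)}$ — which we can arrange since positional encodings are ours to choose, e.g.\ one-hot in $O(\log n)$ bits via a binary-to-unary MLP, or just sufficiently separated reals). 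After attention, an MLP copies the selected child's $\Acal$-field into $v$'s answer field, but \emph{only} if a ``ready'' flag says all of $v$'s children were resolved at the previous layer; the ready flag itself is updated by comparing depths, which is a simple thresholded linear function implementable by the ReLU MLP. Leaves are initialized with answer-field $=\bx_v$ and are ``ready'' from layer $0$.

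There are two bookkeeping points I would handle explicitly. First, the invariant must be stated in terms of height-from-leaves, not depth-from-root; a node becomes resolved one layer after all its children are, so a node at height $h$ is resolved after $h$ layers, and the maximum height is the tree depth, giving $L-1$ layers for depth-$L$ trees (the root is at depth $0$, its deepest leaf at depth $\le L$... I would double-check the off-by-one and, if needed, note that we can always pad with an identity layer). Second, I should make sure that a node does not overwrite its answer field once resolved, and that not-yet-ready nodes don't feed garbage upward — this is why values only propagate through the flag-gated MLP copy, and why I keep the raw $\Acal(u)$ field monotone: once set, never changed. The embedding dimension is $O(d)$ for the label and answer fields plus $O(\log n)$ for positional codes and the depth counter, matching the claim.

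The main obstacle is the attention-score design: packing the indicator $\one{\Pcal(u)=v}$ and the bilinear reward $\inner{\Acal(u),\bx_v}$ into one rank-bounded quadratic form so that hardmax returns exactly $\argmax_{u\in\Ccal(v)}\inner{\Acal(u),\bx_v}$, while ensuring non-children always score strictly lower and ties (genuine ties in the CRQ $\argmax$, which the definition allows to be broken arbitrarily) are consistent with the transformer's arbitrary tie-breaking. Everything else — the MLP gating on depth, initializing leaves, routing the root's output to token $n$, and the $O(\log n)$ precision accounting — is routine given the finite vocabulary and bounded $d$.
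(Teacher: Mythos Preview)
Your plan coincides with the paper's proof at the level of ideas: encode parent pointers in the positional block, make the attention score a large parent-match term plus the bilinear reward $\langle\Acal(u),\bx_v\rangle$ so that hardmax picks the winning child, and collapse one tree level per transformer layer with an MLP doing bookkeeping. The paper's gating is also essentially what you describe, though implemented via depth-from-root rather than height-from-leaves: each token carries a depth counter and a one-bit flag $\mathds{1}(\ell(v)=L-1)$; nodes with the flag on attend to their children while all others attend to themselves; the MLP then increments the counter and, once a node is resolved, rewrites its token to look like a leaf (moves the selected child's value into the label slot and swaps $\bz_v\mapsto\bz_{\Pcal(v)}$), so no separate ``ready'' check is ever needed.

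The one concrete gap is the positional encoding---precisely the step you flag as the crux. Neither of your two suggestions yields an $O(\log n)$-dimensional inner-product separator. ``One-hot in $O(\log n)$ bits via a binary-to-unary MLP'' does not help, because it is the \emph{attention score} $\bm h_u^\top\bm K^\top\bm Q\bm h_v$ that must distinguish children from non-children; any unary expansion would have to live in the key/query, forcing width $\Theta(n)$. ``Sufficiently separated reals'' has the wrong geometry: for scalars, $z\cdot z'$ is maximized by making $z'$ large, not by taking $z'=z$. The paper's fix is a one-line probabilistic lemma: sample $\bz_1,\ldots,\bz_n\in\{\pm1\}^{4\log n}$ uniformly; Hoeffding plus a union bound gives that with positive probability $|\langle\bz_i,\bz_j\rangle|\le 3\log n$ for all $i\ne j$, while $\|\bz_i\|^2=4\log n$. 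Scaling this block by a constant exceeding the (bounded, since $\Gamma$ is finite) range of $\langle\Acal(u),\bx_v\rangle$ then makes the parent-match term dominate exactly as you need. Separately, your ``ready when all children are resolved'' flag is not computable by ``comparing depths'' alone, since for irregular trees height-from-leaves is not a function of $\ell(v)$; either drop the flag entirely and let every node recompute its answer field each layer (your invariant then holds by induction on height), or switch to the paper's depth-from-root scheme where the flag is literally $\mathds{1}(\ell(v)+\text{layer}=L-1)$.
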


The proof appears in \Cref{app:proof:thm:deep transformer}. First, note that the depth of the transformer depends only on the depth of the tree of the CRQ. This is one of the main strengths of transformers: parallelism. Namely, the transformer is able to solve all the sub-questions in the tree of the same depth with just one layer. 

We also emphasize that it is probably not possible to solve CRQs of size $n$ with less than $\log(n)$ precision. The reason is that a transformer without positional encoding is invariant to the order of the tokens. However, the CRQ has an inherent structure, a tree, which without it it cannot be solved. Thus, just to be able to provide even the simplest positional encoding which is numbering the nodes of the tree from $1$ until $n$ requires $O(\log(n))$ bits (cf. \citet{merrill2023a}). Our construction uses a more complex positional encoding that captures the tree structure while still using only $O(\log(n))$ bits.

We now give a short intuition for our construction.
We first define a positional encoding vector for each node that has four parts.
For each non-leaf node $v$, we define an identifying vector $\bz_v\in\{\pm 1\}^{O(\log(n))}$ such that $\inner{\bz_v,\bz_u} \ll \norm{\bz_v}^2$ for any nodes $u,v\in V$. For leaf nodes, we let $\bz_v = \pmb{0}_{O(\log(n))}$.
This is the first part.
The second part of node $v$'s positional encoding vector is the identifier of its parent, $\bz_{\Pcal(v)}$.
The third part is the depth of $v$ in the tree.
The fourth is an indicator variable that is $1$ when the depth is $L-1$, where $L$ is the depth of the tree, and $0$ otherwise.
These four parts are all concatenated to the value of the node $\bx_v$.


We construct each layer of the transformer to be exactly the same. Using the positional encoding, we construct the following attention pattern:
\begin{itemize}
    \item Each non-leaf tokens that have depth smaller than $L-1$ attends only to itself. 
    \item Each non-leaf tokens that has depth $L-1$ attends only to its children. Specifically, it attends to the child child that answers its sub-question, one whose value vector has the largest correlation with the non-leaf token.
    \item Leaf tokens can do as they like, as they will not be attended to in succeeding levels.
\end{itemize}
By constructing this attention pattern, all the sub-questions in layer $L-1$ are solved simultaneously. We then use the MLP to reorganize the tokens. We increase the counter of the depth of every token, and update the indicator for nodes of depth $L-1$. Nodes that in the previous iteration had depth $L-1$ are altered to resemble leaves, i.e. their $\bz_v$ vector is set to zero. For these nodes, we also set their value vector $\bx_i$ to be the answer to their sub-question, as computing in the previous attention layer.
In the next attention layer, nodes previous in layer $L-1$ will play the part of leaves and be attended to by their parents, which were previously in layer $L-2$. Applying this construction $L-1$ times will compute the answer to the root, which is the answer to the CRQ. 



\subsection{Constant depth transformers cannot solve CRQs}

We will now show that solving CRQs using a constant size transformer cannot be done, conditional on the assumption that $\TC^0 \neq \NC^1$. Our main result in this subsection is a reduction from Binary Formula Evaluation Problem (BFEP), which is known to be  $\NC^1$-complete (see \citet{buss1987Boolean}), to solving CRQ.

\begin{lemma}\label{thm:crq nc1 hard}
    The CRQ problem over a finite alphabet is $\NC^1$-hard.
\end{lemma}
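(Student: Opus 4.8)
The plan is to reduce the Boolean Formula Evaluation Problem (BFEP) — deciding whether a given Boolean formula over $\{\wedge,\vee,\neg\}$ (equivalently over $\{\text{NAND}\}$) evaluates to true — to evaluating a CRQ over a fixed finite alphabet $\Gamma$. Since BFEP is $\NC^1$-complete (\citet{buss1987boolean}), exhibiting a many-one reduction computable by, say, a log-space or $\AC^0$ transformation will establish $\NC^1$-hardness of the CRQ problem. The input Boolean formula is naturally a rooted tree whose leaves are literals and whose internal nodes are gates; this is exactly the shape a CRQ wants, so the work is entirely in choosing label vectors $\bx_v\in\Gamma^d$ (for constant $d$) so that the recursively defined answer function $\Acal$ simulates Boolean gate evaluation.

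First I would fix a small dimension $d$ and encode the two Boolean values as two fixed vectors, say $\mathtt{true}\mapsto \bt$ and $\mathtt{false}\mapsto \bff$ in $\Gamma^d$. For a leaf labeled by a literal, set $\bx_v$ to $\bt$ or $\bff$ accordingly (handling negated input literals at the leaf). The crux is implementing an internal gate: given a node $v$ with children whose answers $\Acal(u)\in\{\bt,\bff\}$ are already Boolean-encoded, I need to choose $\bx_v$ so that $\argmax_{u\in\Ccal(v)}\inner{\Acal(u),\bx_v}$ selects a child whose value vector equals the correct gate output. The clean way is to work with binary gates and design $\bx_v$ so that the inner products $\inner{\bt,\bx_v}$ and $\inner{\bff,\bx_v}$ are separated enough that the $\argmax$ over the (at most two) children picks the child carrying the Boolean answer of the gate. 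Concretely, I would reserve coordinates of the embedding to carry both a ``candidate output'' slot and a ``gate-type'' tag, so that $\bx_v$ can, via the inner product, compare the children's candidate-output slots and route the right one upward; AND behaves like ``prefer \texttt{false}'', OR like ``prefer \texttt{true}'', and NOT is pushed into leaves or handled by swapping $\bt,\bff$ in a child's label. One must check that at every internal node the selected child's value vector is again one of $\{\bt,\bff\}$, so the invariant propagates from leaves to the root, and that $\Acal(v_r)$ then equals $\bt$ iff the formula is true. A further bookkeeping point: the CRQ answer is a vector in $\Gamma^d$, so ``solving the CRQ'' in the paper's sense already outputs this vector, and we read off acceptance from whether it equals $\bt$; since $d$ is constant this is a fixed comparison.

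The reduction itself is trivially local — each formula node maps to one CRQ node with a label determined only by the gate type at that node (and, for leaves, the literal) — so it is computable in $\AC^0$ and a fortiori in log space, which is more than enough for $\NC^1$-hardness. The main obstacle I anticipate is purely in the gadget design: making a single inner-product-argmax operation over the children simulate each of AND, OR (and absorb negations) using a \emph{constant} alphabet $\Gamma$ and \emph{constant} dimension $d$ that do not grow with formula size, while keeping all label entries in $\Gamma$. Bounded fan-in ($2$) and normalizing every formula to NAND-only (or AND/OR with NOTs pushed to the leaves) makes this tractable, because then there are only finitely many gate types and each needs only a fixed comparison pattern; with a slightly larger but still constant $d$ one can give each Boolean value an essentially orthogonal representation and tune the gate labels so the correct child strictly maximizes the inner product. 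Once the gadget is verified on a single gate, correctness of the whole reduction follows by a straightforward induction on the height of the formula tree.
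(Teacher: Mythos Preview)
Your proposal is correct and follows the same route as the paper: reduce from BFEP, encode $0/1$ as fixed two-dimensional vectors $\bt_0,\bt_1$, and observe that $\argmax$ of inner product implements $\land$ (query $\bt_0$, ``prefer false'') and $\lor$ (query $\bt_1$, ``prefer true''). The only difference is that the paper handles $\neg$ by an explicit constant-size gadget subtree rather than by pushing negations to the leaves, and accordingly claims a $\TC^0$ reduction; note that your NNF preprocessing requires computing the parity of the number of $\neg$-ancestors of each node and hence is not $\AC^0$-local as you assert, though the log-space bound you also mention suffices.
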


For a full proof, see \Cref{app:proof:thm:crq nc1 hard}. The reduction is straightforward, and resembles the reduction in \citet{feng2024towards} from BFEP to arithmetic problems. The main idea is to define the vectors $\bt_0 = \bt_{\land} = \begin{pmatrix}
    0 \\ 1
\end{pmatrix}$ and $\bt_1 =  \bt_{\lor} = \begin{pmatrix}
    1 \\ 0
\end{pmatrix}$ which correspond to $0$ and $1$ in Boolean formulas. Now note that taking argmax of the dot product with $\bt_{\land}$ corresponds to the $\land$ operation, and the argmax of the dot product with $\bt_{\lor}$ corresponds to the $\lor$ operation. The operation of $\lnot$ is slightly more intricate and defined in \Cref{fig: tree construction}.
All the operations in the reduction can be done using $\TC^0$ circuits, which means that the CRQ problem is $\NC^1$-hard.

It was shown in \citet{merrill2023parallelism} that transformers with constant depth, polynomial size and logarithmic bit- precision (all w.r.t $n$) are in $\TC^0$. Thus, the following theorem follows immediately from \Cref{thm:crq nc1 hard}:

\begin{theorem}\label{thm:crq not tc0}
    Under the assumption that $\TC^0 \neq \NC^1$, for any $L\in\naturals$ and polynomial $P(x)$, there exists $n\in \naturals$ such that no transformer with depth $L$, a number of parameters that is smaller than $P(n)$ and $O(\log(n))$ bit-precision can solve all CRQs of size $n$.
\end{theorem}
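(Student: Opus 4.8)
The plan is to derive \Cref{thm:crq not tc0} as an immediate consequence of \Cref{thm:crq nc1 hard} together with the known upper bound that constant-depth, polynomial-size, logarithmic-precision transformers compute only functions in $\TC^0$ \citep{merrill2023parallelism}. The argument is a standard complexity-theoretic contrapositive, so the work is almost entirely bookkeeping about uniformity and encoding; the mathematical content lives entirely in the reduction of \Cref{thm:crq nc1 hard}.

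First I would fix the contrapositive framing: suppose, for contradiction, that there is a depth bound $L\in\naturals$ and a polynomial $P$ such that for every $n$ there is a transformer $T_n$ of depth $L$, with fewer than $P(n)$ parameters, and $O(\log n)$ bit-precision, that correctly answers all CRQs of size $n$. Each fixed $T_n$ is then a family member of the transformer class analyzed in \citet{merrill2023parallelism}, so by that result the function computed by $T_n$ on size-$n$ inputs is computed by a (DLOGTIME-uniform) $\TC^0$ circuit of polynomial size and constant depth; since $L$ and the precision growth $O(\log n)$ are uniform in $n$, the collection $\{T_n\}$ yields a single $\TC^0$ circuit family solving CRQ at every input length. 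One technical point to be careful about here: CRQ instances of size $n$ must be presented to the transformer as a sequence of $n$ tokens, each consisting of a node label $\bx_v\in\Gamma^d$ plus the positional-encoding triple $(\bz_v,\bz_{\Pcal(v)},\ell(v))$; since $\Gamma$ is a finite alphabet, $d$ is constant, and the positional data needs only $O(\log n)$ bits, this is a faithful polynomial-size binary encoding of the tree, so ``CRQ as a language'' is well-defined and the transformer-to-circuit translation respects it.

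Next I would invoke \Cref{thm:crq nc1 hard}: the CRQ problem over a finite alphabet is $\NC^1$-hard under the appropriate (say $\AC^0$ or $\TC^0$ many-one) reductions, because it is obtained from the $\NC^1$-complete Binary Formula Evaluation Problem by a reduction whose gadgets ($\bt_0,\bt_1,\bt_\land,\bt_\lor$ and the $\lnot$ construction from \Cref{fig: tree construction}) are computable in $\TC^0$. Combining the two steps: if CRQ $\in \TC^0$ and CRQ is $\NC^1$-hard, then $\NC^1 \subseteq \TC^0$, hence $\TC^0 = \NC^1$ (the reverse inclusion $\TC^0\subseteq\NC^1$ being classical). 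This contradicts the hypothesis $\TC^0 \neq \NC^1$. Therefore no such $L$ and $P$ can exist; equivalently, for every $L$ and every polynomial $P$ there is some $n$ at which no depth-$L$, size-$<P(n)$, $O(\log n)$-precision transformer solves all CRQs of size $n$, which is exactly the statement of \Cref{thm:crq not tc0}.

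The only genuinely delicate step is making sure the chaining of reductions stays inside $\TC^0$ — i.e., that the reduction of \Cref{thm:crq nc1 hard} is not merely polynomial-time but $\TC^0$- (or $\AC^0$-) computable and uniform, so that composing it with a hypothetical $\TC^0$ algorithm for CRQ still lands in $\TC^0$ and does not accidentally climb out of the class. This is handled by the remark in the excerpt that ``all the operations in the reduction can be done using $\TC^0$ circuits,'' and by the matching uniformity conventions in \citet{merrill2023parallelism}; I would state explicitly which uniformity notion is used and note that $\TC^0$ is closed under composition with such reductions, but there is no substantive calculation to perform. Everything else is a one-line syllogism.
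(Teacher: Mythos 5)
Your proposal is correct and follows essentially the same route as the paper: combine the Merrill--Sabharwal result that constant-depth, polynomial-size, log-precision transformers compute only $\TC^0$ functions with \Cref{thm:crq nc1 hard} ($\NC^1$-hardness of CRQ) to derive a contradiction with $\TC^0 \neq \NC^1$. The extra care you take about uniformity and input encoding is sound but is exactly the bookkeeping the paper leaves implicit.
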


Combining the two results of this section, we see that transformers have the power to solve the CRQ problem for each layer efficiently, however the number of layers of the transformer depends on the size of the tree. This dependence is also mandatory under the assumption that $\TC^0\neq \NC^1$. One concrete example, which will be relevant in the next section, is a balanced binary tree with $n$ nodes, and $\log(n)$ depth. There exists a transformer that can solve all CRQs of this shape with  $\log(n)$ layers, but not with a constant number of layers. In the next sections we will provide alternative approaches to solve this problem with different models using constant depth. 

We note that arithmetic problems, as presented in \cite{feng2024towards}, can also be embedded as certain generalized CRQs. This embedding is presented in \Cref{app:general crq}. Thus, both \Cref{thm:deep transformer} and \Cref{thm:crq not tc0} can be applied to arithmetic problems, if they are presented in a tree structure. It is in general possible to apply a preprocessing procedure using a depth $O(\log(n))$ transformer to turn an arithmetic problem presented in a sequential form, into a tree structure which captures the order of arithmetic operations. \blue{Note that the above theorem readily extends to transformers with layer normalization and attention masking, since it relies on a result for \cite{merrill2023parallelism} that includes both elements.}


\section{Solution Using Shallow RNNs}\label{sec:rnn}

In this section we will show that under certain assumptions, RNNs can solve CRQs. The proofs for this section appear in \Cref{app:proofs from rnn}. We first define RNNs in the following way: 

\begin{definition}
    An RNN is a fully-connected neural network $\Ncal:\reals^{d+m}\rightarrow\reals^m$. The inputs to the RNN are $\bx_1,\dots,\bx_n\in\reals^d$, and the RNN operates as $\Ncal\left(\begin{pmatrix}
        \bx_i \\ \bh_{i-1}
    \end{pmatrix}\right) = \bh_i$ where $\bh_0,\dots,\bh_n\in\reals^m$. The $\bh_i$'s are called the \textbf{hidden states} of the RNN. $\bh_0$ is defined as part of the architecture, and independent of the input data. The output of the RNN is $\bh_n$\footnote{It is often common to define two separate outputs of the RNN, the hidden state and the prediction. Here we combine them for simplicity.}.
\end{definition}

Note that in contrast to transformers, RNNs are not invariant to changing the order of the inputs. The reason is that the order in which the vectors $\bx_1,\dots,\bx_n$ are fed to the model is part of the input itself, and not part of the architecture like positional encodings for transformers. We will show that the way the nodes of the CRQs are ordered is crucial for having an efficient solution with RNNs. 

\subsection{The order of the nodes matters}

Our next result shows that there are bad orderings which force RNNs to have large hidden dimension:

\begin{theorem}\label{thm:rnn bad order}
Let $T = (V, E)$ be a balanced binary tree of size $n$.
There exists an ordering of the nodes with the following property: Any RNN that reads the inputs in that order and solves all CRQs defined on $T$ must have a hidden state of size $\Omega(n)$ bits. That is, if the hidden state is in $\reals^m$ and each entry is represented by $p$ bits, then $p \cdot m = \Omega(n)$.
\end{theorem}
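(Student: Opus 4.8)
## Proof Proposal

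The plan is to reduce from a two-party communication complexity problem, most naturally \textsc{Set-Disjointness} or \textsc{Index}, using the standard crossing-sequence / fooling-set argument that lower-bounds the hidden state of a streaming computation. The key observation is that an RNN processing the nodes of $T$ in a fixed order is exactly a one-pass streaming algorithm: after it has read the first $k$ inputs, everything it ``knows'' about them is compressed into $\bh_k \in \reals^m$, which by the $O(\log n)$-bit-precision assumption is a message of $p \cdot m$ bits. So if I can exhibit an ordering and a family of CRQs on $T$ such that solving the CRQ requires $\Omega(n)$ bits of information to pass across some fixed cut in the input stream, I am done.

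First I would pick the cut. Take the balanced binary tree $T$ of size $n$, and let the ordering present \emph{all the leaves first}, in left-to-right order, and only afterwards present the internal nodes (in some fixed order, say bottom-up). Cut the stream right after the last leaf. At this point the RNN has seen the $\Theta(n)$ leaf labels but none of the internal-node labels. The internal-node labels are exactly the ``question'' vectors $\bx_v$ that, via $\argmax_{u \in \Ccal(v)} \inner{\Acal(u),\bx_v}$, select which child's answer propagates upward. Now I would encode a hard communication problem: fix the leaf labels to encode a Boolean string $a \in \{0,1\}^{\Theta(n)}$ (using the two-valued vocabulary $\Gamma = \{\pm 1\}$, each leaf carrying one bit via, e.g., $\bt_0,\bt_1$ from the $\NC^1$-hardness construction), and let the internal-node labels, chosen by the second ``player,'' implement a selector that routes a \emph{single} chosen leaf's value all the way up to the root — so that the CRQ answer equals $a_j$ for an index $j$ determined entirely by the internal-node labels. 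This is essentially the \textsc{Index} function: Alice holds $a$, Bob holds $j$, and the one-way communication complexity of \textsc{Index} is $\Omega(n)$.

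Concretely, to route leaf $j$ to the root I would exploit the argmax selection at each internal node along the root-to-leaf path of leaf $j$: at each such node $v$ with children $u_{\text{left}}, u_{\text{right}}$, choose $\bx_v$ so that $\inner{\Acal(u),\bx_v}$ is strictly maximized by whichever child lies on the path to leaf $j$, \emph{regardless} of the values $\Acal(u)$ flowing up from below (this is where I need the leaf-value vectors and the selector vectors to live in slightly different coordinate blocks, so the selector can dominate on its own block while the ``payload'' bit rides along in another block — a small dimension-padding trick analogous to the $\bt_\land,\bt_\lor$ gadget). Off the path, set internal-node labels to constants. Then $\Acal(v_r)$ equals (the vector encoding of) $a_j$, so an RNN solving all such CRQs lets Bob recover $a_j$ from the single message $\bh_k$ plus his own input; hence $p\cdot m \geq$ (one-way communication complexity of \textsc{Index} on $\Theta(n)$ bits) $= \Omega(n)$.

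The main obstacle I anticipate is the payload-routing gadget: making the argmax at an internal node select the correct child \emph{independently} of the arbitrary answer vectors arriving from its subtrees, while still faithfully carrying the payload bit upward unchanged. Over a finite vocabulary $\Gamma$ this requires care — I cannot make selector inner products arbitrarily large, so I must bound the payload block's contribution to the inner product (at most a constant) and give the selector block a fixed gap of at least, say, $2$ between the on-path and off-path children; with $\Gamma \supseteq \{0,\pm 1\}$ and $O(1)$ extra coordinates this is arrangeable, but it is the step that needs an explicit construction rather than a hand-wave. A secondary point is being careful that the lower bound is stated in bits: since each hidden coordinate carries $p$ bits, the message length is $p \cdot m$, and the communication lower bound gives $p \cdot m = \Omega(n)$ directly, matching the theorem statement.
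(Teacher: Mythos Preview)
Your high-level plan—place a cut in the input stream and invoke a one-way communication lower bound—matches the paper's, but the paper reduces from \textsc{Set-Disjointness} rather than \textsc{Index}, and that choice is precisely what dissolves the routing obstacle you flagged. In the paper's construction all node labels are one-dimensional: leaves carry $\pm 1$, the second-to-last layer is identically $-1$ (so those nodes compute $\min$, i.e.\ AND of their two $\pm 1$ children), and every higher internal node is $+1$ (computing $\max$, i.e.\ OR). Hence the root evaluates $\bigvee_i (v_{2i-1}\wedge v_{2i})$. The adversarial ordering puts the odd-indexed leaves first (Alice), then the even-indexed leaves (Bob), then the internal nodes—whose labels are \emph{fixed constants} known to both players. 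No selection gadget is needed at all: the hard information must cross the cut because disjointness itself is hard, not because Bob is steering the computation.

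Your \textsc{Index} route can be pushed through, but not with ``$O(1)$ extra coordinates.'' Since $\Acal(v)$ is always literally one of the leaf vectors—nothing is rewritten on the way up—data-independent selection at every level of the root-to-$j$ path forces the leaf labels themselves to carry enough positional information to separate leaf $j$ from whichever off-path leaf has bubbled up on the other side, at \emph{every} depth. With $|\Gamma|=O(1)$ and $O(1)$ selector coordinates there are only $|\Gamma|^{O(1)}=O(1)$ distinct selector patterns among $\Theta(n)$ leaves, so at some on-path node the two competing answers carry identical selector blocks and your gap collapses. The fix is to take $d=\Theta(\log n)$ selector coordinates (e.g.\ binary-encode the leaf index), after which your argument goes through; but the paper's disjointness encoding already achieves the full $\Omega(n)$ bound at $d=1$, which is both simpler and a strictly stronger statement.
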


The proof is given in \Cref{app:proof:thm:rnn bad order}. Note that \Cref{thm:rnn bad order} doesn't depend on the depth of the RNN.
Note that it is easy to construct an RNN with a hidden dimension of size $O(n)$ that solves all CRQs. The idea is just to encode all the nodes in the hidden state, which can be done if we have enough memory, and then use a large enough neural network to solve the CRQ. Constructing such a neural network is possible by the universal approximation property \citep{cybenko1989approximation,leshno1993multilayer}. Since the number of possible outputs is finite (because the alphabet is finite), we can approximate the solution up to a small enough accuracy and then threshold over the output. 

The proof of \Cref{thm:rnn bad order} uses a communication complexity argument. Specifically, we use a reduction from the set disjointness problem (see \Cref{claim:comm lower bound} for a formal definition). We then construct a communication protocol between two parties, where each one of them has knowledge of only half of the inputs. When using the bad ordering of the nodes, this communication protocol forces the first party to encode all of its inputs into the hidden state when passing it to the second party. 

\citet{feng2024towards} proved that a specific RNNs construction requires a hidden state of size $\Omega(n)$ to solve arithmetic problems. Our results are stronger in that: (1) Our lower bound applies to any possible construction, so long as the bad order is used, and (2) We next show that the $\Omega(n)$ memory requirement can be alleviated by re-ordering the inputs.

\subsection{Memory-rank sort}

In this subsection, we restrict ourselves to CRQs with full binary trees for simplicity; that is, the degree of each node is $0$ or $2$.
In \Cref{app:non binary}, we extend the results from this section to non-binary trees, but note that every CRQ can be converted into an equivalent CRQ with a binary tree and at most twice as many notes (see \Cref{subsec:non binary to binary}).

We will now introduce a sorting algorithm for trees that will allow us to solve any CRQs using an RNN with a small hidden dimension. First, we to define the memory-rank of each node:

\begin{definition}[Memory Rank]
\label{def:mem_rank}
    Let $T= (V,E)$ be a rooted tree. The \textbf{memory rank} of a node $v\in V$ is defined recursively as: $\mr(v) = 0$ if $v$ is a leaf, and otherwise
    \begin{align*}
        \mr(v) &= \max\left(a_{\max}, a_{\min}+1\right)\\
        \mathrm{where} \quad a_{\max} &= \max\left(\mr(c_1(v)),\mr(c_2(v))\right)\\
        a_{\min} &= \min\left(\mr(c_1(v)),\mr(c_2(v))\right)\\
    \end{align*}
    and $c_1(v)$ and $c_2(v)$ are the two children nodes of $v$. The memory rank of the tree $\mr(T)$ is defined as the memory rank of its root.
\end{definition}
Intuitively, the memory rank is the smallest possible stack size needed for a stack machine to solve the CRQ if we are allowed to pick the order of the nodes.
A stack machine is an automaton augmented with a stack that it can uses as memory. It processes the inputs sequentially, optionally pushing and popping a finite number of elements from its stack at each step and performing computations with the resulting vectors.
We should order the inputs according to a post-ordering depth-first search of the CRQ, meaning that the position of a node in the ordering corresponds to when the depth first search \emph{last} visits it.
This ordering akin to reverse Polish notation, which notates an arithmetic expression like $(2 + 3) * 6$ as the following sequence: $2 \, 3 \, + \, 6 \, *$.
By ordering the inputs this way, we ensure that whenever the automaton reads a non-leaf node, the top two elements on the stack are the solutions to the CRQs defined by that node's two children. It can simply pop them off, compute the solution to the current node, and push the result back onto the stack to save it for later.

Even using a depth-first search ordering, the size of the stack used by this machine could grow as large as $n$ in the worst case. However, by ordering the children of each node carefully, we can reduce the stack depth to only $\log n$.
To compute the CRQ corresponding to a given node $v$, we (1) solve for one of its children, (2) save this result on the stack, then (3) solve for the second child. If we solve $c_1(v)$ first, then the largest stack depth during the first stage is $\mr(c_1(v))$ and the largest stack depth during the second stage is $\mr(c_2(v))+1$. The overall largest stack depth is the maximum of these two. Thus, to use as little memory as possible, we should always start with the child whose memory rank is larger.
This sorting algorithm is defined formally in \Cref{alg:mr sort}. See \Cref{fig:mem sort} for an example of memory-rank sort of a tree.
In \Cref{thm:rnns solves crq}, we will simulate this stack machine using a RNN with hidden dimension of size $\mr(T)$.


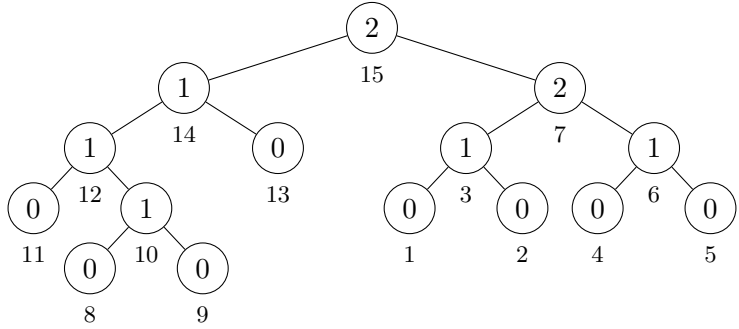
\begin{wrapfigure}[25]{R}{0.5\textwidth}
\begin{center}
\begin{tikzpicture}[level distance=12mm,level 1/.style={sibling distance=32mm},level 2/.style={sibling distance=19mm},level 3/.style={sibling distance=11mm}]
  \node[draw, circle] (1) {2}
    child { node[draw, circle] (2){1}
      child { node[draw, circle] (4) {1}
        child { node[draw, circle] (8) {0} }
        child { node[draw, circle] (9) {1} 
          child { node[draw, circle] (14) {0}}
          child { node[draw, circle] (15) {0}}
        }
      }
      child { node[draw, circle] (5) {0} }
    }
    child { node[draw, circle] (3) {2}
      child { node[draw, circle] (6) {1}
        child { node[draw, circle] (10) {0}}
        child { node[draw, circle] (11) {0}}
      }
      child { node[draw, circle] (7){1}
        child { node[draw, circle] (12) {0}}
        child { node[draw, circle] (13){0}}
      }
    };
    \node[below=0.03 of 1,font=\footnotesize] {15};
    \node[below=0.03 of 2,font=\footnotesize] {14};
    \node[below=0.03 of 3,font=\footnotesize] {7};
    \node[below=0.03 of 4,font=\footnotesize] {12};
    \node[below=0.03 of 5,font=\footnotesize] {13};
    \node[below=0.03 of 6,font=\footnotesize] {3};
    \node[below=0.03 of 7,font=\footnotesize] {6};
    \node[below=0.03 of 8,font=\footnotesize] {11};
    \node[below=0.03 of 9,font=\footnotesize] {10};
    \node[below=0.03 of 10,font=\footnotesize] {1};
    \node[below=0.03 of 11,font=\footnotesize] {2};
    \node[below=0.03 of 12,font=\footnotesize] {4};
    \node[below=0.03 of 13,font=\footnotesize] {5};
    \node[below=0.03 of 14,font=\footnotesize] {8};
    \node[below=0.03 of 15,font=\footnotesize] {9};
\end{tikzpicture}
\end{center}
\caption{Memory-rank sorting of a tree. Each node is labeled with its memory rank, which ranges from 0 to 2 in this case. The number below each node is its ordering according to \Cref{alg:mr sort}. Note that we first traverse the right branch of the tree, since the right child of the root has a higher memory-rank than the left child.
}
\label{fig:mem sort}
\end{wrapfigure}

\subsection{Shallow RNNs can solve CRQs}

We are now ready to present the main theorem of this section:

\begin{theorem}\label{thm:rnns solves crq}
    For any $n\in\naturals$ there exists an RNN with $5$ layers and hidden dimension $O(d\log(n))$ that solves any CRQ with a binary tree, if the nodes are ordered by \Cref{alg:mr sort} 
\end{theorem}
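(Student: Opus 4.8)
The plan is to implement the stack machine described in the text as a small constant-depth RNN whose hidden state holds a bounded stack of node-value vectors, together with a few scalar bookkeeping registers. Concretely, I would reserve in $\bh_i\in\reals^m$ a block of $\mr(T)=O(\log n)$ slots, each of width $O(d)$, to serve as stack cells, plus $O(\log n)$ bits for a stack pointer (the current stack height), a copy of the current depth $\ell$, and scratch space; since $\mr(T)\le \log_2 n + 1$ for a binary tree on $n$ nodes (which I would verify by a short induction on $\mr$ using the recursion in \Cref{def:mem_rank}), this gives hidden dimension $O(d\log n)$ as claimed. The inputs arrive in the order produced by \Cref{alg:mr sort}, i.e.\ the memory-rank-adjusted postorder; by the invariant established in the text, when the RNN reads a non-leaf node $v$, the top two stack cells contain exactly $\Acal(c_1(v))$ and $\Acal(c_2(v))$.

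The core of the argument is then: one RNN step must (i) decide from $\bx_i$'s positional/depth data whether the incoming node is a leaf or an internal node, (ii) if a leaf, push $\bx_v$ onto the stack (increment the pointer, write $\bx_v$ into the indicated slot), and (iii) if internal, read the top two cells $\ba_1,\ba_2$, compute $\argmax_{j\in\{1,2\}}\inner{\ba_j,\bx_v}$, pop both cells, and push the winner. Each of these is a fixed finite-precision piece of arithmetic: the inner products $\inner{\ba_1,\bx_v},\inner{\ba_2,\bx_v}$ are computable by a linear layer followed by a ReLU-based comparison (use $\relu{t}-\relu{-t}$ and a gating trick to select a cell), and the ``read/write slot number $k$'' operations are implemented by multiplying the stack block by a one-hot selector derived from the stack-pointer register — this is where a couple of MLP layers with ReLU activations are needed to synthesize one-hot vectors from a small integer and to route the selected cell. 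Since $2$-layer ReLU networks compute any continuous function on a compact set to arbitrary accuracy and here all quantities lie in a finite set (finite alphabet $\Gamma$, bounded stack, $O(\log n)$-bit precision), $5$ layers suffice to carry out one full push/pop/compare-and-push transition exactly; the constant $5$ I would just track through the construction (roughly: one layer for decode, one or two for slot selection, one for the inner-product comparison, one for the write-back). After reading all $n$ inputs the stack contains the single vector $\Acal(v_r)$ in its bottom cell, and the output map reads that cell off $\bh_n$.

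The main obstacle, and the step I would spend the most care on, is the \emph{random-access} to the stack: the hidden state is a fixed vector, so ``look at cell number (pointer)'' and ``write to cell number (pointer)$\pm 1$'' must be realized as differentiable/finite-precision operations rather than as genuine indexing. The clean way is to maintain the pointer as a one-hot (or small-integer) register and implement shifts by $\pm1$ via a fixed permutation-like linear map, and implement ``read cell $k$'' via the bilinear form (selector$\otimes$stack) — but a single attention-free MLP cannot multiply two of its own inputs, so I would instead precompute all needed products through ReLU gadgets (e.g.\ $xy=\relu{x+y}^2/2-\dots$ style identities specialized to the finite value set, or simply enumerate: because $\Gamma$ is finite and the pointer takes $O(\log n)$ values, the map $(\text{pointer},\text{stack block})\mapsto\text{top two cells}$ has finite range and is exactly representable by a bounded-width ReLU network). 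A secondary check is that the memory-rank invariant genuinely holds for \Cref{alg:mr sort} — that the algorithm starts with the larger-memory-rank child — so that the stack never exceeds $\mr(T)$ cells; I would cite the discussion preceding the theorem and the definition of \Cref{alg:mr sort} for this, adding the one-line induction that the realized stack height at every step is at most $\mr$ of the subtree currently being processed. Everything else (depth counter updates, leaf/internal dispatch, finite-precision bookkeeping) is routine and fits inside the same $5$ layers.
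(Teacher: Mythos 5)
Your proposal is correct in its high-level design -- simulate a stack machine inside the hidden state, with $O(\log n)$ cells of width $O(d+1)$, relying on the memory-rank sort to bound the stack height by $\mr(T)\le\log n$ -- but it differs from the paper's construction in the one place you flag as the main obstacle. The paper does not maintain a stack pointer at all and never does random access into the stack: it keeps the top of the stack at a \emph{fixed} location (the first $d+1$ coordinates of the hidden state) and implements push/pop by shifting the entire stack block down or up by $d+1$ coordinates with a fixed linear map gated by a ReLU flag. Concretely, Layer~1 compares the incoming node's depth to the depth stored in the top cell to set a one-bit flag (this is the dispatch -- equivalent to your leaf/internal test, since under the memory-rank ordering a node has strictly smaller depth than the top of the stack iff it is internal and its children are sitting on top); Layer~2 conditionally applies the shift-down-and-write matrix (push); Layer~3 computes the two inner products of the input with the top two cells via the ReLU inner-product gadget; Layers~4--5 compare them and perform the shift-up-and-overwrite (pop two, push the winner with incremented depth). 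Because the top is always at a fixed offset, there is no pointer to maintain, no one-hot selector, and no need to multiply hidden-state entries against each other -- all the products-with-a-flag reduce to a ReLU clamping trick $\sigma(\cdot+c\mathbf{1})+\sigma(\cdot-c\mathbf{1})$ against a constant bias. Your pointer-based scheme could in principle be made to work (you correctly note that everything lives in a finite set), but it requires synthesizing a one-hot selector from a small integer and routing through it, which is strictly more machinery and would likely cost more than five layers or a polynomially larger width; the paper's shifting trick is the cleaner realization of the same idea and is worth internalizing, since it is the standard way to simulate a stack inside a fixed-width RNN step.
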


The proof is given in \Cref{app:proof:thm:rnns solves crq}. The main idea is to simulate a stack machine with the RNN.
The stack will contain the vectors $\bx_i$ that are needed for future calculations, and will have two possible operations: (1) pop out vectors, and (2) push a vector.
The stack will be transferred through the RNN in the hidden state.
We include the depth of each node in its positional embedding, which will help us determine which of the two operations (push or pop) to use.

At each new input we check whether the last node in the stack has a larger depth by exactly $1$. If it doesn't, then we insert $v$ into the stack and move to the next input. If it does, then we pop out the last two nodes from the stack, say $u_1$ and $u_2$, and calculate the inner products $\inner{\bx_v,\bx_{u_1}}, \inner{\bx_v,\bx_{u_2}}$. We insert back to the stack either $\bx_{u_1}$ or $\bx_{u_2}$, whichever has a higher inner product, but with the depth of the node $v$. All the above operations can be simulated using ReLU neural networks. The memory-rank sort does the rest of the work, since it makes sure that if $v$ is the current node to be processed, the last two nodes in the stack must either be its children, or nodes with a smaller depth. Doing this recursively over all the nodes will gradually solve the CRQ.

Note that in \Cref{thm:deep transformer}, we also needed the embedding dimension of the transformer to be $\Omega(\log(n))$, but for a very different reason. There, the transformer is invariant to re-ordering of the input tokens, and since the output does depend on the ordering (specifically, the structure of the tree) we would need $\Omega(\log(n))$ bits just to label the tokens from $1$ to $n$. For RNNs, the inputs are already ordered, and we even defined a specific ordering algorithm that aligns with our task. The reason that in \Cref{thm:rnns solves crq} we  need the size of the hidden dimension to be $\Omega(\log(n))$ is because in order to solve CRQs, we need to store some of the inner calculations (i.e. answers to the sub-questions) while we solve others. We emphasize that this hardness is not captured by the fact that solving CRQs is $\NC^1$-hard. There are other $\NC^1$-hard problems that can be solved with a constant size RNN, such as the word problem on $S_5$ (see Definition 3.1 and Theorem 5.1 in \citet{merrill2024illusion}).

Finally, note that the memory rank of any tree with $n$ nodes is bounded by $\log(n)$ (see \Cref{lem:mr bound}). In fact, for a given tree structure $T$, the hidden dimension of the RNN needed to solve all CRQs with this structure is bounded by $O(d\cdot\mr(T))$ using our construction.
We next prove that the memory-rank also provides a lower bound on the required size of the hidden dimension:

\begin{theorem}\label{thm:mr lower bound}
    Let $n\in\naturals$, let $T$ be a rooted tree of size $n$ with some ordering of the nodes from $1$ to $n$. Suppose there exists an RNN that solves all CRQs with a tree structure of $T$ if the nodes are provided in the given ordering. Then, the hidden dimension of the RNN must have $\Omega(\mr(T))$ bits. In particular, an RNN that solves all CRQs on all trees of size $n$ for a given ordering must have a hidden dimension with $\Omega(\log(n))$ bits.
\end{theorem}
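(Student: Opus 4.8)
The plan is to follow the communication/fooling-set strategy of \Cref{thm:rnn bad order}, except that the adversary no longer chooses the ordering, so we must \emph{extract} a hard substructure from whatever ordering is given. Fix an ordering $v_{\pi(1)},\dots,v_{\pi(n)}$ of $V$ and an RNN of arbitrary depth that solves every CRQ on $T$ read in this order, with hidden state in $\reals^m$ using $p$ bits per coordinate; since precision is $O(\log n)$, the state attains at most $2^{pm}$ distinct values. The heart of the argument is a purely combinatorial lemma: \emph{for every ordering $\pi$ there is a prefix $P=\{v_{\pi(1)},\dots,v_{\pi(t)}\}$ with $t<n$ and a set $W\subseteq V$ of $\Omega(\mr(T))$ pairwise-incomparable nodes (none an ancestor of another) such that for each $w\in W$ the whole subtree $T_w$ lies in $P$ while $T_{\Pcal(w)}\not\subseteq P$.} The first clause says $\Acal(w)$ is already determined by the labels read so far; the second says some node of $\Pcal(w)$'s subtree is still in the future, so $\Acal(w)$ is still "pending": it will be consumed only when $\Pcal(w)$ is eventually resolved. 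Incomparability makes the subtrees $\{T_w\}_{w\in W}$ pairwise disjoint, hence their answers independently choosable, and also guarantees that every $T_{w_i}$ ($i\ne j$) hangs off the path from $w_j$ to the root.

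For the lemma I would induct on $T$, splitting off the root $v_r$ with children $c_1,c_2$, $\mr(c_1)\ge\mr(c_2)$. If $\mr(c_1)=\mr(T)$, restrict $\pi$ to $V(T_{c_1})$ and apply induction inside $T_{c_1}$; since no proper subtree of $T_{c_1}$ is ever all of $T$, the resulting prefix and set $W$ transfer verbatim to $T$. Otherwise \Cref{def:mem_rank} forces $\mr(c_1)=\mr(c_2)=\mr(T)-1$, and one must show that some global prefix simultaneously exposes $\mr(T)-1$ pending subtrees inside one of $T_{c_1},T_{c_2}$ and at least one pending subtree inside the other. Here I would track how $\pi$ interleaves the two sub-orderings: the numbers of $c_1$- and $c_2$-nodes in the prefix move by unit steps, and the "number of maximal completed subtrees inside $T_{c_i}$" changes by at most one at each step and never drops back to zero once positive, which lets one locate the desired crossing time. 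I expect this interleaving bookkeeping to be the main obstacle; conceptually it is the assertion that the memory rank (a shifted Horton--Strahler / register number of the tree) lower-bounds the cutwidth-type cost of \emph{every} layout — which is exactly why \Cref{alg:mr sort} is optimal.

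Given the lemma, I finish with a fooling-set argument. Write $W=\{w_1,\dots,w_k\}$, $k=\Omega(\mr(T))$; the answer $\Acal(w_i)$ is always the label of some leaf of $T_{w_i}$, and $T_{w_i}$ has at least two leaves (or $w_i$ is a leaf), so using disjointness of the $T_{w_i}$ one can label $P$ to realize any prescribed tuple $(\Acal(w_1),\dots,\Acal(w_k))$ drawn from a product of two-element sets of mutually near-orthogonal vectors in $\Gamma^d$ (available once $d$ is large enough relative to $\log\mr(T)$), giving at least $2^k$ realizable tuples. It remains to show distinct tuples force distinct hidden states after reading $P$: given $\bm a\ne\bm a'$ differing in coordinate $j$, keep the sub-labelings of $T_{w_i}$ ($i\ne j$) identical and set all remaining nodes of $V$ — all of which lie outside $\bigcup_i T_{w_i}$ and may be fixed identically for both assignments — so that along the path from $\Pcal(w_j)$ to $v_r$ every $\argmax$ selects the branch containing $w_j$; taking the label of each such node to be the sum of the two candidate values of $\Acal(w_j)$ and all off-path leaves to $\zero$ makes the relevant inner products $\approx\norm{\cdot}^2$ against $\approx 0$, so routing succeeds under both assignments. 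Then the root answer equals $\Acal(w_j)$ under both labelings, hence differs between them; if the two $P$-prefixes produced the same hidden state, appending the common suffix would make the RNN return the same (wrong, for at least one) answer. Therefore $2^{pm}\ge 2^k$, i.e.\ the hidden state has $pm=\Omega(\mr(T))$ bits. Finally, the balanced binary tree on $n$ nodes has $\mr(T)=\Theta(\log n)$ — the upper bound from \Cref{lem:mr bound}, the lower bound by unrolling \Cref{def:mem_rank} on a perfectly balanced tree — which yields the stated $\Omega(\log n)$ bound for every ordering.
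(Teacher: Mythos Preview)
Your approach is genuinely different from the paper's. The paper does \emph{not} search for a good cut point in the given ordering. Instead it first extracts from $T$ an embedded complete binary substructure of depth $\mr(T)$ (by repeatedly descending to a child of equal rank until two children of rank $\mr(T)-1$ are found, and recursing), labels every node outside this substructure with special ``pass-through'' vectors $\bt_+,\bt_-$ in $\reals^3$ so that they are never selected by any $\argmax$, and then reuses its balanced-tree argument: cut just before the \emph{last leaf} $u_1$ of the ordering, and argue that at that moment the hidden state must store one bit for each sibling $w_i$ along the root-to-$u_1$ path in the embedded substructure. The paper never proves a prefix/antichain lemma; everything is reduced to the balanced case with constant-dimensional labels.

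Your route is conceptually appealing but has two real gaps. The first you already flag: in the inductive case $\mr(c_1)=\mr(c_2)=\mr(T)-1$ the ``crossing time'' sketch does not go through as written, because the number of maximal completed subtrees inside $T_{c_i}$ is \emph{not} $\pm 1$-Lipschitz in the prefix length --- adding one node can collapse an arbitrarily long chain of nested completions --- and it can drop back to zero even after becoming positive (consider reading all of $T_{c_2}$ before any of $T_{c_1}$). So the interleaving bookkeeping needs a different invariant.

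The second gap is in the fooling-set step and is more structural. Your $2^k$ Alice-inputs must agree on every prefix node outside $\bigcup_i T_{w_i}$; in particular the labels of the internal nodes on the path from $\Pcal(w_j)$ to $v_r$ that happen to lie in $P$ are fixed once and for all, independently of which coordinate $j$ you later want to ``route'' along. Your proposed label ``sum of the two candidate values of $\Acal(w_j)$'' does not depend on $j$ only if all $w_i$ share the same two candidate values, and then it is $(1,1)$, which \emph{ties} on the two candidates. Worse, the off-path sibling at a branching point may itself be some $T_{w_{i'}}$ with $i'\ne j$, whose answer $a_{i'}$ varies across the fooling set --- so no single fixed label at that node can prefer the $w_j$-branch for every element of the set. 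This is exactly why the paper does not route: it places AND/OR gates $u_2,\dots,u_k$ on the path and \emph{reads out} the bits via the nested formula $u_k(\cdots u_2(u_1,b_1)\cdots,b_{k-1})$, choosing the $u_i$'s (some of which may be in the suffix) as the distinguishing completion. Finally, your requirement that $d$ grow like $\log\mr(T)$ to furnish near-orthogonal vectors weakens the statement; the paper's construction lives in $\Gamma^3$.
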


The proof is given in \Cref{app:proof:thm:mr lower bound}. Combining \Cref{thm:rnns solves crq} and \Cref{thm:mr lower bound}, we see that the memory-rank gives a complete characterization of the memory needed to solve CRQs using RNNs.

\section{Solution Using Shallow Transformers with Chain of Thought}\label{sec:cot solves crq}

In this section we show that adding the ability to produce chain of thought tokens can also help solve CRQs using a constant depth transformers. We assume in this section that the nodes are ordered in reverse BFS ordering. Namely, for a tree with $n$ nodes and depth $L$, the nodes are numbered from $1$ until $n$ starting from the nodes at depth $L$, and going up until the root which is numbered $n$.
Our main result in this section is the following:

\begin{theorem}\label{thm:cot solves crq}
    There exists a  $2$-layer transformer that solves all the CRQs with trees containing $n$ nodes. The embedding dimension is $O(d+\log(n))$, the bit-precision of the transformer is $O(\log(n))$, and the number of chain-of-thought tokens generated is $n$.
\end{theorem}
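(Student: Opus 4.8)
The plan is to build a $2$-layer transformer whose $i$-th chain-of-thought token equals the answer $\Acal(v_i)$ of the $i$-th node in the reverse-BFS ordering; then the final ($2n$-th) token is $\Acal(v_n)=\Acal(v_r)$, the answer to the CRQ. The point of the reverse-BFS order is that every child of $v_i$ has index strictly smaller than $i$, so its CoT token --- carrying the value $\Acal(u)$ --- has already been produced and sits in the sequence by the time the model processes $v_i$ (this holds regardless of the order chosen within each level). As in \Cref{thm:deep transformer} we take the identifiers $\bz_v$ to be distinct $\{\pm1\}$-vectors of dimension $O(\log n)$, so $\inner{\bz_v,\bz_v}$ exceeds $\inner{\bz_v,\bz_u}$ by a fixed margin whenever $u\neq v$, and leaves carry $\bz_v=\zero$; we also let the positional encoding of position $p$ contain $p$ and $p^2$, which suffices to implement attention to an absolute position and the shift $p\mapsto p-n$.

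\textbf{Layer 1 (fetch structure).} A freshly created CoT token at position $n+i$ knows only its index. Its first job, done by the Layer-1 attention head, is to attend to input position $i=(n+i)-n$ and copy into its stream the node embedding $(\bx_{v_i},\bz_{v_i},\bz_{\Pcal(v_i)},\ell(v_i))$. Every earlier CoT token $n+j$ ($j<i$) does the same for its own node, so after Layer 1 it holds both $\Acal(v_j)$ (fed back at the end of step $j$) and $\bz_{\Pcal(v_j)}$. The MLP only rearranges coordinates and forms the bit $\one{v_i\text{ is a leaf}}=\one{\bz_{v_i}=\zero}$.

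\textbf{Layer 2 (solve the sub-question).} For non-leaf $v_i$ we need $\Acal(v_i)=\argmax_{u\in\Ccal(v_i)}\inner{\Acal(u),\bx_{v_i}}$. We choose $\bm Q_2,\bm K_2$ so that the attention score between the query at position $n+i$ and the key at CoT position $n+j$ is $M\,\inner{\bz_{v_i},\bz_{\Pcal(v_j)}}+\inner{\bx_{v_i},\Acal(v_j)}$, where $M$ exceeds the largest possible $|\inner{\bx_{v_i},\Acal(v_j)}|$ (which is $O(d)$ since $\Gamma$ is finite); keys at input positions get a strongly negative score. Since $\bz_{\Pcal(v_j)}=\bz_{v_i}$ exactly when $v_j\in\Ccal(v_i)$, the $M$-scaled term isolates the children of $v_i$, and among them the residual term $\inner{\bx_{v_i},\Acal(v_j)}$ breaks the tie precisely as the CRQ recursion demands; hence the hardmax returns the child $u^\star$ with $\Acal(u^\star)=\Acal(v_i)$, and $\bm V_2$ writes $\Acal(v_i)$ into the output slot. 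The not-yet-computed token at position $n+i$ cannot be selected here, because a child always has smaller index. For a leaf $v_i$ the query is instead gated by the leaf bit to attend to input position $i$ and copy $\bx_{v_i}=\Acal(v_i)$. The Layer-2 MLP places $\Acal(v_i)$ into the $\bx$-coordinates; this is $\by_{n+i}$, which is fed back for the next step. After $n$ steps, $\by_{2n}=\Acal(v_r)$.

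\textbf{Expected obstacle.} The delicate part is verifying correctness under the combination of no attention masking and autoregressive feedback: one must check that in each forward pass every CoT position $n+j$ ($j\le i$) recomputes $\Acal(v_j)$ consistently from the already-correct answers of its children (all with index $<j$), that the placeholder token at position $n+i$ never wins an earlier position's argmax, and that every quantity --- the scaled separations $M\inner{\bz_v,\bz_u}$, the positional indices and their squares, and the inner products over $\Gamma^d$ --- is representable in $O(\log n)$ bits. Choosing the margins so that a single hardmax step performs both ``keep only the children'' and ``take the $\argmax$ inner product'' at once is the technical core of the construction; the rest is routine ReLU-MLP bookkeeping.
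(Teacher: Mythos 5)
Your construction is correct and achieves the same interface as the paper's, but it is a genuinely different two-layer construction. Both you and the paper order the nodes in reverse BFS and produce exactly one CoT token per node (so the $n$-th CoT token carries $\Acal(v_r)$), and both use nearly-orthogonal $\{\pm 1\}$-vectors for node identifiers. The difference is in how the work is divided between the two self-attention layers.

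In the paper's proof, Layer 1 is essentially a single block of the deep-transformer construction from \Cref{thm:deep transformer}: each \emph{node} token at any position solves its sub-question by attending to its best child (where previously generated CoT tokens carry the solved values of non-leaf sub-questions in the disguise of leaf embeddings). Layer 2 then uses a second family of near-orthogonal positional vectors $\bw_{I(v)}, \bw_{I(v)+1}$ so that the last generated CoT token attends to the \emph{next} node in reverse-BFS order and copies the solution computed there. By contrast, your Layer 1 is a fetch step that equips the fresh CoT token at position $n+i$ with the node data $(\bx_{v_i},\bz_{v_i},\bz_{\Pcal(v_i)},\ell(v_i))$ via a positional hop $p\mapsto p-n$ implemented with $p,p^2$ encodings, and your Layer 2 has that CoT token solve $v_i$'s sub-question itself by attending to the already-written CoT tokens $\Acal(v_j)$ of $v_i$'s children. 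What your variant buys is that CoT positions need only scalar position encodings $(p, p^2)$ rather than full tree-structure encodings; what the paper's variant buys is direct re-use of the deep-transformer attention pattern and uniformly bounded $\pm 1$ positional entries.

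One point you correctly flag but should not understate: the leaf case. Because $\bz_{v_i}=\zero$ for leaves, your Layer-2 structural term vanishes and the residual inner-product term would select a spurious token. You propose "gating by the leaf bit," but the query map $\bm Q_2$ is linear in the hidden state, so the gating has to be done by the Layer-1 MLP producing different query features conditioned on $\one{\bz_{v_i}=\zero}$ (e.g.\ ReLU-based multiplicative gating $z\mapsto \sigma(z - C(1-\text{flag})) - \sigma(-z - C(1-\text{flag}))$). That is routine, but it is a real piece of construction, not bookkeeping. Aside from that, the off-by-one consistency you worry about in the ``expected obstacle'' paragraph does check out: children of $v_j$ have strictly smaller reverse-BFS index, so every already-generated CoT position $n+j$ re-derives the same $\Acal(v_j)$ in every subsequent forward pass, and the fresh position $n+i$ can never win an earlier position's argmax since no node is its own ancestor.
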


The proof idea is to solve all the sub-questions of the CRQ from the bottom-up. Each generated CoT token will be a solution to one of the sub-questions represented by a non-leaf node. Each node token contains two sets of positional encoding. The first one represents the tree structure and is similar to the positional encoding from \Cref{thm:deep transformer}. The second one encodes the position in the reverse BFS ordering of the nodes. The role of this embedding is so that at each operation of the transformer, the last token that was generated will attend to the next token in the reverse BFS order. This way, all the sub-question in the same layer are solved one after the other, before moving on to the next layer.
The construction of the transformer itself contains $2$-layers of self-attention. The first layer is similar to a single layer of the construction from \Cref{thm:deep transformer}. Its role is to solve a single layer of sub-questions. The second layer make sure that the last generated token can only attend to the next token in the reverse BFS order, using the designated positional encodings.

Our work leaves open the question of whether a constant-depth transformer can solve CRQs by generating $o(n)$ CoT tokens. In \cite{li2024chain} it was shown that transformers with constant depth that generate $O(\log(n))$ CoT tokens lie in $\TC^0$ (see also Theorem 4 of \cite{merrill2024little}). By \Cref{thm:crq nc1 hard} we know that solving CRQs is $\NC^1$-hard. Thus, under the assumption that $\TC^0 \neq \NC^1$, we know that transformers need more than logarithmically many CoT tokens to solve CRQs. However, the gap between $\Omega(\log(n))$ and our solution with $O(n)$ CoT tokens is still open.
We also note that by \cite{amiri2025lower}, in the finite precision case, a lower bound of $\Omega(n)$ CoT tokens for solving CRQs can be derived. Hence, in this case, our construction is asymptotically optimal. In the log-precision case (which is the main setting of this paper), the question of whether our construction is optimal remains open.


We now compare our result with the CoT solution from \citet{feng2024towards} for arithmetic problems. Both solutions use a transformer with $O(1)$ layers and $O(\log(n))$ bit precision. However, their solution generated $O(n^2)$\footnote{In the arithmetic problems from \citet{feng2024towards} we refer to $n$ as the number of symbols in the problems. i.e. numbers, parenthesis and arithmetic operations.} CoT tokens, while ours generates only $O(n)$ tokens. This is a significant improvement, since for large sequences, a quadratic number of generated tokens can be infeasible to generate during inference time. The reason for this improvement is that in our solution each generated token solves exactly one sub-question, thus the number of tokens can be bounded by the number of sub-questions. In the solution of \citet{feng2024towards}, each time one sub-problem is solved (e.g. adding two numbers) the entire expression is rewritten. Thus, the number of tokens needed is $n + (n-1) + (n-2) + \cdots = O(n^2)$.
However, we use a different input format that encodes the tree structure directly into the positional encodings, rather than processing the arithmetic expression from left to right.

\section{Discussion and Future Work}
\label{sec:conclusion}

In this paper, we introduce the framework of compositional reasoning questions. We describe the trade-offs inherent in solving CRQs with deep transformers, RNNs, and shallow transformers with CoT. Our results indicate that although transformers are highly parallelizable, they must be deep 
to be able to solve CRQs; in particular, the depth must scale with the depth of the CRQ tree. In contrast, RNNs and CoT can solve CRQs with constant depth, but their operation is not parallelizable. Transformers require quadratic computational cost, but RNNs can solve CRQs in nearly linear time. Finally, transformers with CoT solve CRQs using a logarithmic number of parameters, while the other models require the number of parameters to be linear in $n$ in the worst case.

\blue{
Our results have a broader significance for the grand challenge of improving LLM reasoning. The three architectures we consider correspond to three schools of thought about how to make progress on this problem, each of which has received support from the theoretical literature:
\begin{enumerate}
    \item Scale up transformers by, e.g., adding more layers, and greater capabilities will continue to emerge. Transformers with depth $\log(n)$ can do k-hop induction and graph algorithms \citep{merrill2024little,sanford2024transformers,sanford2024representational}.
    \item Devise better architectures, such as state space RNNs or hybrid models, because attention is not all you need to learn and reason efficiently. RNNs can more easily do state tracking tasks like recognizing regular languages \citep{merrill2024illusion}.
    \item Introduce test-time scaling via chain of thought, opening a new axis of the design space. This enables dynamic programming and simulating Turing machines \citep{feng2024towards,merrill2023expresssive}.
\end{enumerate}
In previous work, the school that looks most promising depends on which model problem they chose to study.
CRQs are just as fundamental as any of the other problems studied in the literature.
However, when using CRQs as a theoretical yardstick of reasoning, we find that none of the schools is a clear winner and none is a clear loser.
Thus, the theory of representational capacity does not give a single answer to the question of how to improve LLM reasoning.
Perhaps some blend of the three schools or a fourth school will emerge that dominates the others, but for now the answer is equivocal and contingent, \emph{even for a single task}.
}

There are several open questions remaining. First, is our solution using constant depth transformers that generates  $n$ CoT tokens optimal, or is it possible to solve CRQs while generating fewer tokens?
Second, our work only considers the expressiveness point of view, it would be interesting to understand the optimization process of transformers and other models when solving CRQs. Finally, it would be interesting to understand how general the solutions learned by transformers are. Do they  generalize to other trees with different structures, sizes and different question regimes?

\blue{\paragraph{Acknowledgments}
NA was supported by NSF award 2234660. We thank Will Merrill for helpful discussions related to this work and for drawing our attention to the Boolean formula evaluation problem.
}


\bibliography{bib}
\bibliographystyle{abbrvnat} 

\newpage
\appendix

\crefalias{section}{appendix} 
\crefalias{subsection}{appendix} 

\section{Proofs from \Cref{sec:deep transformers}}\label{app: proofs from deep transformers}

\subsection{Proof of \Cref{thm:deep transformer}}
\label{app:proof:thm:deep transformer}

The following lemma is useful for defining the positional encodings:

\begin{lemma}\label{lem:orth vectors pm 1}
    For any $k \geq 2$ there exist $\bv_1,\dots,\bv_k\in\{\pm1\}^{4\log(k)}$ such that $|\inner{\bv_i,\bv_j}| \leq 3\log(k)$ for all $i\neq j$.
\end{lemma}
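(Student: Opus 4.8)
The plan is to use the probabilistic method. First I would set $D = \lceil 4\log k\rceil$ and draw vectors $\bv_1,\dots,\bv_N$ independently and uniformly at random from $\{\pm 1\}^D$, for a suitable $N$ slightly larger than $k$ (a power $N = \Theta(k)$, e.g. $N = 2k$, will do for all but finitely many $k$). The key elementary observation is that for a fixed pair $i\neq j$, the coordinatewise products $(\bv_i)_\ell (\bv_j)_\ell$, $\ell\in[D]$, are i.i.d.\ Rademacher variables, so $\inner{\bv_i,\bv_j}$ is a sum of $D$ independent $\pm 1$ random variables. By the Chernoff--Hoeffding inequality,
\[
\P\!\left[\,\bigl|\inner{\bv_i,\bv_j}\bigr| > 3\log k\,\right] \;\le\; 2\exp\!\left(-\frac{(3\log k)^2}{2D}\right) \;\le\; 2\,k^{-c}
\]
for an absolute constant $c>1$, where in the last step I use $D \le 5\log k$ for $k\ge 2$. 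This already shows that a single typical pair is ``good'' with high probability; the remaining work is to make all $\binom{N}{2}$ pairs good simultaneously.

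Rather than a naive union bound, I would use the deletion (alteration) method. Call a pair $\{i,j\}$ \emph{bad} if $\bigl|\inner{\bv_i,\bv_j}\bigr| > 3\log k$. The expected number of bad pairs is at most $\binom{N}{2}\cdot 2k^{-c} = O(k^{2-c}) = o(k)$, so for $k$ larger than an absolute constant there is a realization of $\bv_1,\dots,\bv_N$ with strictly fewer than $N-k$ bad pairs. Deleting one vector from each bad pair leaves at least $k$ vectors that are pairwise good, which is exactly the assertion (the choice $N = 2k$ rather than exactly $k$ is what creates the slack between ``$o(k)$ bad pairs'' and ``$N-k$ vectors to spare''). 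Two bookkeeping points remain: if one wants the dimension to be literally $4\log k$ one can pad with constant coordinates, which shifts every inner product by a fixed constant absorbed into the bound; and the finitely many small values of $k$ below the threshold are handled by an explicit construction (for instance, orthogonal rows of a small Hadamard matrix, or by hand, e.g.\ $(1,1,1,1)$ and $(1,1,1,-1)$ for $k=2$).

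I expect the only real obstacle to be the constant-chasing. With the stated constants $4$ and $3$, the exponent $\tfrac{(3\log k)^2}{2D} \approx \tfrac{9}{8}\log k$ is \emph{not} large enough to beat the $\log\binom{k}{2} \approx 2\log k$ loss of a straightforward union bound over all pairs of a single batch of $k$ random vectors, so that route fails; it is precisely to circumvent this that one must sample a few extra vectors and delete the offenders (equivalently, run a Gilbert--Varshamov-style greedy selection, since a $\pm 1$ pairwise correlation bound of $3\log k$ against squared norm $4\log k$ is a binary-code minimum-distance condition). Beyond that, verifying the inequality $\binom{N}{2}\cdot 2k^{-c} < N - k$ for the chosen $N$, and dispatching the handful of small $k$, are routine.
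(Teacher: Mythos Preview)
Your route differs from the paper's, and your central diagnosis is actually sharper than the paper's own argument. The paper samples exactly $k$ vectors and applies Hoeffding followed by a straight union bound over all $\binom{k}{2}$ pairs, asserting $\Pr\bigl[\lvert\inner{\bv_i,\bv_j}\rvert\ge 3\log k\bigr]\le 2\exp(-4\log k)$. But the Hoeffding exponent for a sum of $D=4\log k$ Rademachers at deviation $t=3\log k$ is $t^{2}/(2D)=\tfrac{9}{8}\log k$, not $4\log k$; with the correct bound the union bound over $\binom{k}{2}\sim k^{2}/2$ pairs does not close---exactly what you point out. Your alteration fix (oversample $N=\Theta(k)$ vectors and delete one endpoint of each bad pair) is the right repair: since the per-pair failure probability is $O(k^{-9/8})$, the expected number of bad pairs among $N=2k$ vectors is $O(k^{7/8})=o(k)$, leaving at least $k$ good survivors for all large $k$. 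Two minor caveats on your write-up: the step ``$D\le 5\log k$ hence $c>1$'' is too loose if $\log$ denotes the natural logarithm (it then yields only $c=9/10$); use $D=4\log k+O(1)$ so the exponent tends to $9/8>1$. And with $N=2k$ the ``finitely many small $k$'' to be handled separately extends up to a sizable absolute constant, so the Hadamard/by-hand remark needs a little more (e.g.\ a Gilbert--Varshamov count, or simply taking $N$ a larger constant multiple of $k$). For context, the paper's simpler union-bound route would go through verbatim for $D=C\log k$ with any constant $C>4$, and since every downstream use of the lemma only needs $D=O(\log k)$, the discrepancy is cosmetic.
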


\begin{proof}
    We sample $\bv_1,\dots,\bv_k\in\{\pm 1\}^{4\log(k)}$ uniformly at random. Note that for any $i \neq j$ we have that $\mathbb{E}[\inner{\bv_i,\bv_j}] = 0$. By Hoeffding's inequality we have that:
    \begin{equation}
        \pr\left(\left|\inner{\bv_i,\bv_j} \right| \geq 3\log(k) \right) \leq 2\exp(-4\log(k))~.
    \end{equation}
    Applying union bound on the above for all pairs $i \neq j$ we get that:
    \[
    \pr\left(\forall i\neq j,~\left|\inner{\bv_i,\bv_j} \right| \leq 3\log(k) \right) \geq 1-   2\exp(-4\log(k))\cdot k^2 \geq 1 - \frac{2}{k^2}~.
    \]

In particular, for $k \geq 2$ this probability is non-zero, meaning that there exists such vectors $\bv_1,\dots,\bv_k$.
\end{proof}

\begin{proof}[\Cref{thm:deep transformer}]
    For any CRQ with up to $n$ nodes, there exist at most $|Y| \leq n$ possible queries. By \Cref{lem:orth vectors pm 1} there exist vectors $\bz_1,\dots,\bz_n\in \{\pm 1\}^{4\log(n)}$ such that $|\inner{\bz_i,\bz_j}| \leq 3\log(n)$ for any $i\neq j$.

    We will first describe how we embed each node as an input token to the transformer, and then describe the construction of the transformer itself. 
    For each node $v\in V$ that is not a leaf, we set one of the vectors $\bz_i$'s to be corresponded with this node, and denote it as $\bz_{v}$. Recall that $\Pcal(v)$ is the parent of node $v$. Note that although each node may have several children, it only has one parent.
    
    Each node of the tree will be embedded as a vector in $\reals^{2d+8\log(n) + 2}$.
    The input tokens corresponding to a leaf node $u$ and a non-leaf node $v$ are defined as:
    \begin{equation}
        \bt_{u} := \begin{pmatrix}
            \bx_u \\ \bm{0}_d \\ \bz_{\Pcal(u)} \\ \bm{0}_{4\log(n)} \\ 0 \\ 0
        \end{pmatrix},~~ 
        \bt_{v} := \begin{pmatrix}
            \bx_v \\ \bm{0}_d \\ \bz_{v} \\ \bz_{\Pcal(v)} \\ \ell(v) \\ \mathds{1}(\ell(v) = L-1)
        \end{pmatrix}~,
    \end{equation}
    where $\ell(v)$ is the depth of node $v$ (the length of the shortest path from $v$ to the root).

    The crux of the construction lies mainly in the way we choose the embedding: The embedding first contains the value of the leaves or query of the corresponding node. The embedding also contains a positional encoding (PE), where each leaf token contains its parent's PE, and each non-leaf token contains its own PE as well as its parent's. In addition, there is a number representing the depth of each node and a flag on whether a node is in the second to last layer (i.e it is the deepest intermediate node connected to a leaf).

    We construct the weights of the transformer so it will work as follows: In the first layer, each token representing an intermediate node will attend either to itself, if its depth is higher than $L-1$, or to its children, if its depth is exactly $L-1$. This means that only the tokens of nodes in depth $L-1$ will change while all other tokens of intermediate nodes will remain the same. Using the positional encodings, the tokens at depth $L-1$ will attend only to their children, not to themselves, and after the self-attention layer (before applying the $V$ matrix) will be equal to the token representing their child with the highest correlation to their query. We now use the $V$ matrix, residual connection and MLP so that the tokens with depth $L-1$ will have a similar embedding to the leaves tokens, and decrease the depth of all other intermediate tokens. We also make sure that the flag after the first layer will be equal to $1$ for nodes of depth $L-2$. We now apply a similar layer exactly $L$ times, so that in the final layer the output of the token representing the root will be equal to the solution to the CRQ.
    
    We now turn to the formal construction of the transformer.
    The order of the input tokens is not important for the construction, except for the root token which will be the last token since its final embedding will include the answer to the CRQ. Let $c\geq 1$ be some universal constant such that $\norm{\bx}^2 \leq c$ for every $\bx\in\Gamma^d$, there exists such a constant since $\Gamma$ is finite. The matrices of all the layers of the transformer will be the same and equal to:
    \begin{align*}
        &K = \begin{pmatrix}
            I_{2d + 8\log(n)}& & & \\
            & 0& & \\
            & & 1&
        \end{pmatrix},\\
        &Q = \begin{pmatrix}
            I_{2d}& & & & &\\
            & 6cI_{4\log(n)}& & & &\\
            & & cI_{4\log(n)}& & & \\
            & & & 0&  & & \\
            & & & &-6c\log(n)&
        \end{pmatrix},\\
        &V = \begin{pmatrix}
            0_d & I_d & \\
            I_d & 0_d & \\
            & & 0_{8\log(n) + 2}
        \end{pmatrix}~.
    \end{align*}
    Before defining the MLP, we will explain how the attention layer operates on the input tokens.  Since the attention mechanism contains a hardmax head, the output of each token will depend only on the token to which it attends. We will first show the next two claims:
    \begin{enumerate}
        \item Each token non-leaf node $v$ with $(\bt_v)_{-1} = 0$  will attend to itself.
        \item Each token non-leaf $v$ with $(\bt_v)_{-1} = 1$ will attend to $\bt_u$ with $\arg\max_{\bt_u, u\in\Ccal(v)} \inner{\bx_u,\bx_v}$.
    \end{enumerate}

    For the first claim, let $v$ be some node with $(\bt_v)_{-1} = 0$. Then we have that:

    \begin{equation}
        \bt_v^\top K^\top Q \bt_v = \norm{\bx_v}^2 + 6c\norm{\bz_v}^2 + c\norm{\bz_{\Pcal(v)}}^2 \geq  28c\log(n)
    \end{equation}
    Now, let $u\neq v$ be some other node. If $u$ is a leaf then we have that:
    \begin{align*}
        \bt_u^\top K^\top Q \bt_v &\leq |\inner{\bx_u,\bx_v}| + 6c|\inner{\bz_{\Pcal(u)},\bz_v}| + c|\inner{\pmb{0},\bz_{\Pcal(v)}}|\\
        &\leq c + 24c\log(n) +  < 25c\log(n)~.
    \end{align*}
    If $u$ is not a leaf then we have that:
    \begin{align*}
        \bt_u^\top K^\top Q \bt_v &\leq |\inner{\bx_u,\bx_v}| + 6c|\inner{\bz_u,\bz_v}| + c|\inner{\bz_{\Pcal(u)},\bz_{\Pcal(v)}}|\\
        &\leq c + 18c\log(n) + 4c\log(n) < 23c\log(n)~,
    \end{align*}
    where we used that $|\inner{\bz_u,\bz_v}| \leq 3\log(n)$ for $u\neq v$. This shows that $\bt_v$ can only attend to itself. 

    Next, let $v$ be a node with $(\bt_v)_{-1} = 1$. Let $u$ be a node with $u\in \Ccal(v)$ and let $w\neq v$ with $w\notin \Ccal(v)$. We have that:
    \begin{align*}
        \bt_v^\top K^\top Q \bt_u &=  \inner{\bx_u,\bx_v} + 6c\norm{\bz_v}^2 \\
        &\geq -c+24c\log(n)  \geq 23c\log(n)\\
        \bt_v^\top K^\top Q \bt_w & \leq |\inner{\bx_w,\bx_v}| + 6|\inner{\bz',\bz_v}| + |\inner{\bz_{\Pcal(w)},\bz_{\Pcal(v)}}| \\
        &\leq c + 18c\log(n) + 4c\log(n) < 23c\log(n) \\
        \bt_v^\top K^\top Q \bt_v &= \norm{\bx_v}^2 + 6c\norm{\bz_v}^2 + c\norm{\bz_{\Pcal(v)}}^2 - 6c\log(n) \\
        &\leq c + 24c\log(n) + 4c\log(n) - 6c\log(n) < 23c\log(n)~.
    \end{align*}
    Here $\bz'$ is some vector with $\bz'\neq \bz_v$ (n fact, $\bz' = \bz_w$ or $\bz'= \bz_{\Pcal(w)}$, depending on whether $\bw$ is a leaf). This shows that $\bt_v$ can only attend to its children. It is also clear that $\bt_v$ will attend to its child $u$ that maximized $\inner{\bx_u,\bx_v}$. This finishes the two claims above.

    After applying the $V$ matrix and the residual connection from the previous layer, the output of the self-attention layer on token $\bt_v$ is equal $\bt_v + \bo_v$, where $(\bo_v)_{(d+1:2d)} = \bx_v$  if $(\bt_v)_{-1} =0$ and $(\bo_v)_{(d+1:2d)} = \arg\max_{\bx_u,u\in\Ccal(v)} \inner{\bx_u,\bx_v}$ if $(\bt_v)_{-1} =1$. In words, the entries of each token in place $d+1$ until $2d$ after the self-attention layer is the solution to the sub-question for the deepest non-leaves nodes of the tree, and for other nodes it is the queries themselves.
    
    We now define an MLP that will reorganize the tokens before applying the next layer of self-attention. The role of the MLP will be to perform the following operations on an input token $\bt_v$:
    \begin{enumerate}
        \item If the entries in places $2d+4\log(n)+1$ to $2d+8\log(n)$ are zero, then $\bt_v = \pmb{0}$.
        \item If the last entry of $\bt_v$ is equal to $1$, then replace the entries in places $2d+1$ until $2d+4\log(n)$ (which used to hold $\bz_v$) with the entries in places $2d+4\log(n)+1$ until $2d+8\log(n)$ (which used to hold $\bz_{\Pcal(v)}$). Then, replace  the entries in places $2d+4\log(n)+1$ until $2d+8\log(n)$ with zeroes. This makes it a leaf embedding.        
        \item Replace between the entries in places $1$ until $d$ and $d+1$ until $2d$. Then replace the entries in places $d+1$ until $2d$ with zeroes.
        \item Add $1$ to the entry in place $2d+8\log(n)+1$. Then, if this entry is equal to $L-1$, replace the last entry with 1, if this entry is equal to $L$ replace the last two entries with zeroes.
    \end{enumerate}

    The first operation zero out any token that corresponded to a leaf node, as they won't be necessary in later stages of the computation. The second operation turn tokens in the second to last layer to have the same embedding as leaf nodes, namely that they only contain the PE of their parent, and don't have a PE of their own. The third operation update the value of $\bx_v$, this only changes for nodes that were in the second to last layer. For any other non-leaf node, since it attended to itself both vectors are equal. The last operation updates the depth of each node. Now, each node that was in the second to last layer will be embedded similarly to a leaf, while nodes in the third to last layer will have $1$ in their last coordinate.
    
    Each such operation can be implemented by a $2$-layer MLP with width bounded by $O(d+\log(n))$. For example, to implement the first operation we use the function:
    \[
    f(z) = \sigma(z + 1) - \sigma(z - 1/2) + \sigma(1 - z) - \sigma(1/2 - z)~,
    \]
    that can be implemented by the first layer of an MLP. We concatenate the output of this function on the $2d+4\log(n)+1$-th coordinate of the input, to the input itself. Then, we use the output layer to apply the function $\tilde{\bx}\mapsto \pmb{0}_k \cdot (\tilde{\bx})_{k+1} + \tilde{\bx} \cdot (1 - (\tilde{\bx})_{k+1})$, where $\tilde{\bx}$ is the input concatenated with the additional coordinate of the output of $f$, and $k=2d+8\log(n)+2$. 
    use similar constructions to implement the rest of the operations.

    After applying the MLP we have that each token is either equal to: (1) $\pmb{0}$ if it was previously a leaf. (2) An embedding of a leaf if it was previously in the second to last layer; or (3) An embedding of a non-leaf node if it wasn't a leaf or in the second to last layer.
    By applying the same construction $L$ times, we get that each node except the root is equal to $\pmb{0}$, while the first $d$ coordinates of the root contain the answer to the CRQ, which is the final output of the transformer. This finishes the proof.
\end{proof}

\subsection{Proof of \Cref{thm:crq nc1 hard}}
\label{app:proof:thm:crq nc1 hard}

\begin{figure}[t]
\begin{minipage}{0.45\textwidth}
\begin{tikzpicture}[
  level distance=1.5cm,
  level 1/.style={sibling distance=4cm},
  level 2/.style={sibling distance=2cm},
  level 3/.style={sibling distance=1.5cm},
  every node/.style={draw, circle, minimum size=8mm, font=\footnotesize, align=center}
]

\node {$\begin{pmatrix}
          -1 \\ -2
      \end{pmatrix}$} 
  child {node {$\begin{pmatrix}
          2 \\ 1
      \end{pmatrix}$} 
    child {node {$\begin{pmatrix}
          -2 \\ 1
      \end{pmatrix}$} 
      child {node {$\varphi$}} 
      child {node {$\begin{pmatrix}
          1 \\ 1
      \end{pmatrix}$}} 
    }
    child {node {$\begin{pmatrix}
          1 \\ 0
      \end{pmatrix}$}} 
  }
  child {node {$\begin{pmatrix}
          0 \\ 1
      \end{pmatrix}$}}; 

\end{tikzpicture}
\end{minipage}
\hfil
\begin{minipage}{0.45\textwidth}
\begin{tikzpicture}[
  level distance=1.5cm,
  level 1/.style={sibling distance=3cm},
  every node/.style={draw, circle, minimum size=8mm, font=\footnotesize}
]

\node {$\bt_0/\bt_1$} 
  child {node {$\varphi_1$}} 
  child {node {$\varphi_2$}}; 

\end{tikzpicture}
\end{minipage}

\caption{Left: Construction for $\neg \varphi$. Right: Construction for $\varphi_1 \land \varphi_2$ and $\varphi_1 \lor \varphi_2$}
\label{fig: tree construction}
\end{figure}
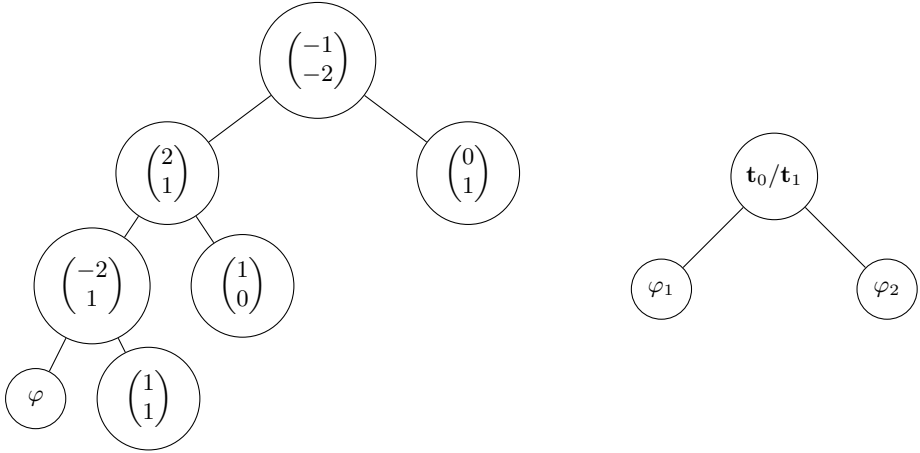

    Our proof relies on a reduction to the Boolean Formula Evaluation Problem (BFEP). It was shown in \citet{buss1987Boolean}, that this problem is $\NC^1$-complete, thus a reduction of BFEP to CRQ would finish the proof. The BFEP is defined recursively over the alphabet: $\Sigma  = \{1,0,\lor,\land,\neg,(,)\}$ in the following way:
    \begin{enumerate}
        \item $0$ and $1$ are Boolean formula.
        \item If $\varphi$ is a Boolean formula, then $(\neg \varphi)$ is a Boolean formula.
        \item If $\varphi_1,\varphi_2$ are Boolean formulas, then $(\varphi_1 \lor \varphi_2)$ and $(\varphi_1 \land \varphi_2)$ are Boolean formulas.
    \end{enumerate}

The goal in BFEP, is given a Boolean formula to evaluate whether it is true or false (i.e. outputs $1$ or $0$).
We will construct a translation function $f$ from Boolean formulas to CRQ, and specifically to a tree structure where each node is given a specific value. We will use vectors in $\reals^2$, while in fact all the possible values of the vectors in the CRQ will be in $\{0,\pm 1, \pm 2\}$. We define the vectors $\bt_1 = \begin{pmatrix}
    1 \\ 0
\end{pmatrix}$ and $\bt_0 = \begin{pmatrix}
    0 \\ 1
\end{pmatrix}$. The translation function is defined recursively as follows:
\begin{enumerate}
    \item For $f(0)$ and $f(1)$ we construct a node in the tree whose corresponding value is equal to $\bt_0$ and $\bt_1$ respectively.
    \item Given two Boolean formulas $\varphi_1,\varphi_2$ we define $f((\varphi_1 \land \varphi_2))$ and $f((\varphi_1 \lor \varphi_2))$ as the tree in \Cref{fig: tree construction} (right), where for $\land$ we use $\bt_0$ for the root, and for $\lor$ we use $\bt_1$.
    \item Given a Boolean formulas $\varphi$ we define $f((\neg \varphi))$ as the tree in \Cref{fig: tree construction} (left).
\end{enumerate} 

We will first show that this construction can be done using $\TC^0$ circuits, and then show its correctness. Given a Boolean formula of length $s$, it is clear that the construction above creates a tree with at most $O(|s|)$ nodes, where the values of each nodes is a $2$-d vectors with entries in $\{0,\pm 1, \pm 2\}$. Thus, the total number of bits requires for the construction is also bounded by $O(|s|)$. 

The translation works recursively over the logical operators $\neg,\land,\lor$, and to make the construction work we need to know in which order to translate them. Namely, the root should begin with the outmost logical operator, its children the second to outmost operators etc., until we reach either $0$ or $1$, which should be translated to leaves in the tree. We will show that this order can be determined using $\TC^0$ circuits. Given a Boolean formula $\varphi$ we define by $\varphi_i$ its character in place $i$. We define:
\begin{equation}
    r_i = \sum_{j < i} \mathds{1}\big[\varphi_j = `(\text{'}\big] - \sum_{j < i} \mathds{1}\big[\varphi_j = `)\text{'}\big]~.
\end{equation}
It is easy to see that $r_i$ can be calculated using $\TC^0$ circuits. 
Now, the order in which the recursion works is in increasing order w.r.t $r_i$ over the logical operators. Namely, begin with the logical operator in place $i$ where $r_i=0$, continue with $r_i= \pm 2$ an so forth.

For the correctness of the reduction, it is an easy calculation to see that given $t_i$ for $i\in \{0,1\}$ the construction for $\neg\varphi$ outputs $t_{1-i}$ with the definition of a CRQ. Also, for $\land$ and $\lor$ the construction outputs correctly $t_i \land t_j$ and $t_i \lor t_j$ for $i,j\in\{0,1\}$. Therfore, this construction forms a reduction from the BFEP to CRQ, and thus CRQ is $\NC^1$-hard.

\section{Memory-rank sort}\label{app:memory rank}

In this section we formally define the memory-rank sort. It is presented in \Cref{alg:mr sort}. We next prove that performing memory-rank sort and calculting the memory-rank of all the nodes in a tree can be done efficiently:

\begin{lemma}
    Let $T=(V,E)$ be a rooted binary tree, then calculating the memory-rank of each node and performing the memory-rank sort takes time $O(|V|)$.
\end{lemma}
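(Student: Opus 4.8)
The plan is to establish two parts: first that the memory rank of every node can be computed in $O(|V|)$ total time, and second that the memory-rank sort (\Cref{alg:mr sort}) — which I take to be a DFS that at each node visits first the child with larger memory rank — also runs in $O(|V|)$ time.

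For the first part, I would compute $\mr(v)$ for all $v$ by a single post-order traversal of $T$. Since $\mr(v)$ depends only on $\mr(c_1(v))$ and $\mr(c_2(v))$ (and is $0$ for leaves), a post-order DFS visits each node exactly once and, when it visits $v$, both children's values are already available; computing $\mr(v) = \max(a_{\max}, a_{\min}+1)$ from the two child values is $O(1)$ work (two comparisons and an increment). A post-order DFS on a tree touches each vertex and each edge a constant number of times, so the total cost is $O(|V| + |E|) = O(|V|)$ since $|E| = |V| - 1$ in a tree. One should note that the numbers involved are at most $\mr(T) \le \log|V|$ (by \Cref{lem:mr bound}), so each arithmetic operation is on $O(\log|V|)$-bit integers and is legitimately $O(1)$ in the standard word-RAM model; if one wanted to be fully careful about bit complexity the bound would be $O(|V|\log|V|)$, but in the usual convention $O(|V|)$ is correct.

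For the second part, the memory-rank sort itself is again a single DFS: starting at the root, at each node $v$ we recurse first into the child with the larger $\mr$ value, then into the other child, and we assign output positions in post-order (so a node receives its number after both subtrees are numbered). Deciding which child to recurse into first is a single comparison of the already-computed $\mr(c_1(v))$ and $\mr(c_2(v))$, i.e. $O(1)$ per node. Hence this traversal also visits each vertex once and does $O(1)$ extra work per vertex, for a total of $O(|V|)$. Combining the two traversals (or interleaving them into one) gives the claimed $O(|V|)$ bound.

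I do not expect a genuine obstacle here; the statement is essentially a routine observation that both procedures are single linear-time tree traversals. The only point requiring a sentence of care is the computational model remark above — whether to count the comparisons of $O(\log n)$-bit memory-rank values as unit cost — and making sure the definition of \Cref{alg:mr sort} is exactly "DFS visiting the higher-memory-rank child first, numbering in post-order," which is what the surrounding text (the reverse-Polish / stack-machine discussion and \Cref{fig:mem sort}) describes. So the write-up will simply: (i) describe the post-order DFS computing $\mr$, argue correctness by induction on subtree depth and linear running time by the standard DFS accounting; (ii) observe \Cref{alg:mr sort} is the analogous DFS with an $O(1)$ child-ordering decision, again linear time; (iii) conclude.
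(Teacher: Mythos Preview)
Your proposal is correct and essentially the same as the paper's proof: the paper computes the memory ranks by a single bottom-up traversal (they say ``reverse BFS'' rather than post-order DFS, but the argument is identical) and then bounds the running time of \Cref{alg:mr sort} by observing each leaf is visited once and each internal node at most three times in the while loop, which is just the iterative-DFS accounting you describe. Your bit-complexity remark is extra care that the paper does not include, but it does not change the approach.
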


\begin{proof}
    Calculating the memory rank of each node can be done using a reverse breadth-first search (BFS) of the tree, traversing from the leaves to the root. BFS can be done in time $O(|V|)$, and calculating the memory rank of each nodes requires only knowledge of the memory rank of its children. Thus, this can be done in a linear time by going over each node only once. 
    
    The time that \Cref{alg:mr sort} runs is the number of iterations in the ``while'' loop. Note that each leaf in the tree is visited at most once, since if $v_\text{cur}$ is a leaf, it is inserted to the list and not visited again. For any non-leaf node, it is visited at most $3$ times during the loop. Once for each one of its children, and again when it is inserted into the list. In total the running time of the sorting algorithm is bounded by $3|V|$.
\end{proof}

\begin{algorithm}[t] 
 \textbf{Input:} A Tree $T=(V,E)$ with root $v_r$ and memory rank calculated for each node.\\
 $S = []$\\
 $v_{\text{cur}} \leftarrow v_r$\\
 \While{$|S| \neq |V|$}{
 \If{$\Ccal(v_{\text{cur}}) = \varnothing \textbf{ or } \Ccal(v_{\text{cur}}) \subseteq S$}{$S \leftarrow v_{\text{cur}}$ \\
 $v_{\text{cur}} \leftarrow \Pcal(v_{\text{cur}})$\\
 \textbf{Continue}
 }
  $v_{\text{cur}} \leftarrow \underset{v\in \Ccal(v_{\text{cur}}), v \notin S}{\arg\max}\mr(v)$
 
    }
    \textbf{Return: $S$}
 \caption{Memory-Rank Sort 
 }
 \label{alg:mr sort}
\end{algorithm}

\section{Proofs from \Cref{sec:rnn}}\label{app:proofs from rnn}
\subsection{Proof of \Cref{thm:rnn bad order}}
\label{app:proof:thm:rnn bad order}

\begin{proof}
    Our proof uses the following claim from communication complexity:
    \begin{claim}[Lower bound for set disjointness \citep{yao1979some}] \label{claim:comm lower bound}
        Suppose Alice and Bob are given inputs $a,b\in \{0,1\}^n$, respectively. Their goal is to calculate $\max_{i} a_ib_i$ by alternately sending $1$-bit messages to one another over a sequence of communication rounds. Any deterministic protocol for computing  $\max_{i} a_ib_i$ requires at least $n$ rounds of communication.
    \end{claim}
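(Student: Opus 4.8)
The plan is to establish this as the textbook deterministic communication lower bound for (set) intersection, via the combinatorial rectangle / fooling-set method. Write $f(a,b) = \max_i a_i b_i$, which equals $1$ iff the sets encoded by $a$ and $b$ intersect.

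First I would recall the rectangle structure of deterministic protocols. In any protocol where, at each step, one designated player sends a single bit determined by that player's own input together with the transcript so far, the set of input pairs $(a,b)$ that produce a fixed transcript (equivalently, reach a fixed leaf of the protocol tree) is a \emph{combinatorial rectangle} $A\times B$ with $A\subseteq\{0,1\}^n$, $B\subseteq\{0,1\}^n$. This follows by induction on the transcript length: the empty transcript yields $\{0,1\}^n\times\{0,1\}^n$, and each time Alice sends a bit the current rectangle $A\times B$ splits as $(A_0\times B)\sqcup(A_1\times B)$ according to the bit value (symmetrically for Bob), so rectangularity is preserved. Since a correct protocol outputs $f(a,b)$ at the leaf reached by $(a,b)$, every leaf-rectangle is monochromatic for $f$. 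A protocol using at most $c$ rounds has a protocol tree of depth $\le c$, hence at most $2^{c}$ leaves, so it induces a partition of $\{0,1\}^n\times\{0,1\}^n$ into at most $2^{c}$ monochromatic rectangles. The alternation requirement in the statement is harmless here: it is a special case of ``one designated player speaks per round,'' for which the rectangle lemma already applies.

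Next I would exhibit a fooling set of size $2^n$ for the value $0$. Take $S=\{(a,\bar a)\,:\,a\in\{0,1\}^n\}$, where $\bar a$ denotes the bitwise complement. For every $a$ the sets encoded by $a$ and $\bar a$ are disjoint, so $f(a,\bar a)=0$. Now fix distinct $a\neq a'$ and pick a coordinate $i$ with $a_i\neq a'_i$. If $a_i=1$ (so $a'_i=0$, hence $\bar{a'}_i=1$), then $f(a,\bar{a'})\ge a_i\bar{a'}_i=1$; if instead $a_i=0$ (so $\bar a_i=1$ and $a'_i=1$), then $f(a',\bar a)\ge a'_i\bar a_i=1$. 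In either case one of the two cross pairs $(a,\bar{a'})$, $(a',\bar a)$ has $f$-value $1\neq 0$.

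Finally I would combine the two observations. If two distinct elements $(a,\bar a),(a',\bar{a'})$ of $S$ lay in a common monochromatic rectangle $R=A\times B$, then $a,a'\in A$ and $\bar a,\bar{a'}\in B$, so $R$ would also contain the cross pairs $(a,\bar{a'})$ and $(a',\bar a)$; since $R$ is monochromatic and contains $(a,\bar a)$ with $f$-value $0$, all of $R$ would have $f$-value $0$, contradicting the previous paragraph. Hence the $2^n$ elements of $S$ lie in $2^n$ distinct monochromatic rectangles, so any correct protocol satisfies $2^{c}\ge 2^{n}$, i.e. $c\ge n$ rounds. The only point requiring care — not really an obstacle — is the rectangle-partition lemma and keeping track of which constant value ($0$) the fooling-set argument is pinned to; the rest is a one-line counting argument.
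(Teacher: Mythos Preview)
Your argument is correct and is the standard fooling-set proof of the deterministic communication lower bound for disjointness. However, the paper does not prove this claim at all: it is stated as a citation to \citet{yao1979some} and used as a black box in the proof of \Cref{thm:rnn bad order}. So there is no ``paper's own proof'' to compare against; you have simply supplied the classical justification that the paper takes for granted.
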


    For simplicity of the proof, we use trees with $4n-1$ nodes instead of $n$, this will only affect the constant in the $\Omega$ notation.
    We construct the following CRQ: The tree $T=(V,E)$ is a balanced binary tree with $4n-1$ nodes, thus it has $2n$ leaves. All the values of the leaves are in $\{\pm1\}$. The nodes in the second to last layer are all equal to $-1$, and the rest non-leaf nodes are equal to $1$. We number the leaves as $v_1,\dots,v_{2n}$.
    Suppose we are given an instance of the set-disjointness problem, where Alice is given the leaves $v_{2i-1}$ and Bob the leaves  $v_{2i}$  for $i\in[n]$. By the definition of CRQ, the answer to the second to last layer nodes with leaves $v_{2i-1}$ and $v_{2i}$ is $1$ if $v_{2i-1} = v_{2i} = 1$ and $-1$ otherwise. In addition the answer to any other node will be $1$ if one of its children is $1$ and $-1$ otherwise. In total, the answer to the tree is $1$ if there exists some $i\in[n]$ with $v_{2i-1} = v_{2i} = 1$ and $-1$ otherwise.

    Consider the following ordering of the nodes for the RNN: The first $n$ nodes are $v_{2i-1}$ for $i\in[n]$, the next $n$ nodes are $v_{2i}$ for $i\in[n]$. The root is the last node in this ordering, and the rest of the nodes are given in some arbitrary order. Suppose there is an RNN known to both Alice and Bob that solves the CRQ problem above in the prescribed order of the nodes, and it has a hidden state $\bh\in\reals^m$ where each coordinate is represented by $p$ bits. We will define the following communication protocol to solve the set disjointness problem: Alice apply the RNN on her inputs. After each input $v_{2i-1}$ for $i\in[n]$ she passes the hidden state $\bh_i$ to the next recurrence of the RNN. After $n$ such recurrences of the RNN, the hidden state $\bh_n$ is passed to Bob, he inputs his inputs $v_{2j}$ for $j\in[n]$ in the prescribed order, each time passing a hidden state. He continue to run the RNN on the rest of the nodes, and output the answer to the CRQ.
    The number of bits transferred between Alice and Bob is $m\cdot p$, since the only communication between them is transferring $\bh_n$ and Alice finishes processing her inputs. By \Cref{claim:comm lower bound} we have that $mp = \Omega(n)$, which finishes the proof.
    

\end{proof}

\subsection{Proof of \Cref{thm:rnns solves crq}}
\label{app:proof:thm:rnns solves crq}

We first need the following two lemmas:

\begin{lemma}\label{lem:mr bound}
    For a binary tree $T$ with $n$ nodes, we have that $\mr(T) \leq \log(n)$.
\end{lemma}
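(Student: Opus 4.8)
The plan is to reduce the lemma to the more convenient sub-tree size bound: for every node $v$, the subtree rooted at $v$ contains at least $2^{\mr(v)}$ nodes. Applying this to the root $v_r$ immediately gives $n \geq 2^{\mr(T)}$, and hence $\mr(T) \leq \log(n)$. So the whole content is in proving this auxiliary claim, which I would do by induction on the size of the subtree rooted at $v$ (equivalently, by structural induction up from the leaves).

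The base case is a leaf $v$: then $\mr(v) = 0$ and its subtree has $1 = 2^0$ node. For the inductive step, take an internal node $v$ with children $c_1, c_2$ (the definition of $\mr$, and hence the lemma, is stated for full binary trees, so each internal node has exactly two children and no degree-one case arises). Set $a_{\max} = \max(\mr(c_1),\mr(c_2))$ and $a_{\min} = \min(\mr(c_1),\mr(c_2))$, so that $\mr(v) = \max(a_{\max}, a_{\min}+1)$, and let $N_1, N_2$ be the sizes of the two subtrees, which by the inductive hypothesis satisfy $N_i \geq 2^{\mr(c_i)}$, i.e. $N_1 + N_2 \geq 2^{a_{\max}} + 2^{a_{\min}}$. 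The subtree at $v$ then has $N_1 + N_2 + 1$ nodes. If $a_{\max} \geq a_{\min}+1$, then $\mr(v) = a_{\max}$ and $N_1+N_2+1 \geq 2^{a_{\max}} + 1 \geq 2^{\mr(v)}$. If instead $a_{\max} = a_{\min}$, then $\mr(v) = a_{\max}+1$ and $N_1 + N_2 + 1 \geq 2^{a_{\max}} + 2^{a_{\min}} + 1 = 2^{a_{\max}+1} + 1 \geq 2^{\mr(v)}$. In either case the claim holds, completing the induction.

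I do not expect a genuine obstacle here; the only thing that requires care is choosing the right strengthening — counting subtree nodes (or, equivalently, leaves), rather than attempting to induct on $\mr$ directly — and then handling the $\max(a_{\max}, a_{\min}+1)$ in the recursion via the two-case split above, which is exactly where the ``$+1$'' in the definition of memory rank is paid for by a doubling of the subtree size. (One could in fact push the same argument to get the slightly sharper $n \geq 2^{\mr(T)+1}-1$, but the weaker $n \geq 2^{\mr(T)}$ already yields the stated bound $\mr(T)\le\log(n)$.)
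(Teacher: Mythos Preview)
Your proposal is correct and is essentially the same argument as the paper's: both prove that a node of memory rank $k$ has a subtree of size at least $2^k$, and then apply this at the root. The only cosmetic difference is that the paper phrases the induction over the value of the memory rank (showing $r_k \ge 2r_{k-1}$ for the minimal subtree size $r_k$), whereas you do structural induction over the tree; your version is in fact slightly cleaner.
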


\begin{proof}
    Let $r_k$ be the smallest number of descendants for a node with memory-rank $k$. It is easy to see that $r_1 = 2$, which happens when the node has only two children that are leaves. Let $v$ be a node with $\mr(v) = k$, and let $u_1,u_2$ be its children. It can be seen that $r_k \geq 2r_{k-1}$. This is because, if $\mr(u_1)\neq \mr(u_2)$, then one of them must be equal to $k$, assume w.l.o.g it is $u_1$. This means that either $u_1$ or one of its descendants have two children with a memory-rank of $k-1$. In particular, the number of descendants of $v$ is larger than $2r_{k-1}$. If  $\mr(u_1)= \mr(u_2)$, then they are both equal to $k-1$, in which case the same conclusion follows. Applying the recurrence formula, we get that $r_k \geq 2^k$.

    Let $k$ be the memory rank of a tree with $n$ nodes. We have $2^k \leq r_k \leq n$. Thus, $k\leq \lfloor\log(n)\rfloor$. Hence, for any tree with $n$ nodes, the largest possible memory-rank of the tree is bounded by $\log(n)$.

\end{proof}

\begin{lemma}\label{lem:inner prod}
    Let $R,\epsilon > 0$ and $d\in\naturals$. There exists a $2$-layer neural network $\Ncal:\reals^{2d}\rightarrow\reals$ with width $O\left(\frac{d^2 R}{\epsilon}\right)$ such that $\max_{\bx,\by\in[-R,R]^d} |\Ncal(\bx,\by) - \inner{\bx,\by}| \leq \epsilon$~.
\end{lemma}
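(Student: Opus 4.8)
The plan is to reduce the approximation of the bilinear map $(\bx,\by)\mapsto\inner{\bx,\by}$ first to approximating a single scalar product, and then to approximating a single scalar square, each by a one–hidden–layer ReLU network.

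First I would exploit linearity: since $\inner{\bx,\by}=\sum_{i=1}^{d}x_iy_i$, it suffices to build a $2$-layer network $\phi\colon\reals^2\to\reals$ with $\sup_{a,b\in[-R,R]}\abs{\phi(a,b)-ab}\le\epsilon/d$ and width $O(dR/\epsilon)$, and then set $\Ncal(\bx,\by):=\sum_{i=1}^{d}\phi(x_i,y_i)$. Placing the hidden neurons of all $d$ copies side by side in a single hidden layer (each block reading off the relevant pair of input coordinates) and summing their contributions at the output shows that $\Ncal$ is again a $2$-layer network, of width $d$ times that of $\phi$, i.e.\ $O(d^2R/\epsilon)$; and by the triangle inequality $\abs{\Ncal(\bx,\by)-\inner{\bx,\by}}\le d\cdot(\epsilon/d)=\epsilon$ on $[-R,R]^d\times[-R,R]^d$.

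Next I would reduce scalar multiplication to scalar squaring by polarization: $ab=\tfrac14\left((a+b)^2-(a-b)^2\right)$, and $a\pm b\in[-2R,2R]$ whenever $a,b\in[-R,R]$. So it is enough to approximate $g(s)=s^2$ on $[-2R,2R]$ by a $2$-layer network $\psi$ with sup-error at most $\delta:=2\epsilon/d$; then $\phi(a,b):=\tfrac14\left(\psi(a+b)-\psi(a-b)\right)$ has error at most $\tfrac14\cdot 2\delta=\epsilon/d$, and since $a\pm b$ are linear in $(a,b)$, it is a $2$-layer network of width twice that of $\psi$. For $\psi$ I would take the continuous piecewise-linear interpolant of $g$ at $N+1$ equally spaced nodes on $[-2R,2R]$, with mesh $h=4R/N$. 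The standard $C^2$ interpolation estimate gives $\sup_{[-2R,2R]}\abs{\psi-g}\le\tfrac{h^2}{8}\norm{g''}_\infty=\tfrac{4R^2}{N^2}$, so $N=\ceil{R\sqrt{2d/\epsilon}\,}=O(dR/\epsilon)$ suffices to reach $\delta$ (using that $R\sqrt{d/\epsilon}\le dR/\epsilon$ in the regime $\epsilon\lesssim d$; if $\epsilon$ exceeds the trivial bound $\max\abs{\inner{\bx,\by}}\le dR^2$ one just takes $\Ncal\equiv 0$). Finally, any continuous piecewise-linear function of one real variable with $N$ pieces is exactly expressible as $s\mapsto c+a_0s+\sum_{j}a_j\sigma(s-t_j)$ with $O(N)$ terms, i.e.\ as a $2$-layer ReLU network with $O(N)$ hidden units. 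Chaining the widths: $\psi$ has width $O(dR/\epsilon)$, hence so does $\phi$, hence $\Ncal$ has width $O(d^2R/\epsilon)$, as claimed.

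The construction itself is routine; the only steps that need genuine care are the two error-propagation bookkeepings—that the per-coordinate error $\epsilon/d$ really accumulates to $\epsilon$ after summing the $d$ copies, and that it survives the factor-$\tfrac14$ polarization with the correct constant—and the claim that only $N=O(dR/\epsilon)$ interpolation nodes are needed, which relies on the $O(1/N^2)$ convergence rate of piecewise-linear interpolation of a smooth function rather than the naive $O(1/N)$ rate of a piecewise-constant-type approximation.
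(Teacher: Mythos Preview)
Your proof is correct and follows the same outer structure as the paper: reduce $\inner{\bx,\by}$ to $d$ copies of a scalar-product approximator with per-coordinate error $\epsilon/d$, and stack them into a single hidden layer of width $O(d^2R/\epsilon)$. The only difference is that the paper dispatches the scalar case in one line by citing Lemma~6 of \citet{daniely2017depth} (which gives a width-$O(R/\epsilon)$ two-layer network approximating $(a,b)\mapsto ab$ on $[-R,R]^2$), whereas you unpack that step explicitly via polarization and piecewise-linear interpolation of $s\mapsto s^2$. Your self-contained construction in fact yields the sharper width $O\big(dR\sqrt{d/\epsilon}\big)$ before relaxing to $O(d^2R/\epsilon)$; the paper's citation gives only the latter directly, so nothing is lost and the added detail is a plus.
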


\begin{proof}
    We can use Lemma 6 from \citet{daniely2017depth} to find a $2$-layer network $N:\reals^2\rightarrow\reals$ with width $O\left(\frac{R}{\epsilon}\right)$ such that $\max_{x,y\in[-R,R]^d} |N(x,y)- x\cdot y| \leq \epsilon$. Summing $d$ such networks over the coordinates of $\bx$ and $\by$, and replacing $\epsilon$ with $\epsilon' = \frac{\epsilon}{d}$ proves the lemma.
\end{proof}

\begin{proof}[\Cref{thm:rnns solves crq}]
    The main crux of the proof is to simulate a stack using a ReLU network with constant depth. The maximal size of the stack for a tree $T=(V,E)$ will be $O(d\cdot\mr(T))$. The RNN will execute the following pseudo-code:
    \begin{enumerate}
        \item Given an input node $v$, check the last node $u$ in the stack:
        \begin{enumerate}
            \item If $u$ has the same depth or a larger depth than $v$, then add $v$ to the stack.
            \item If $u$ has a depth that is lower by exactly $1$ than $v$, then pop out the last two nodes from the stack $u$ and $w$. If $\inner{u,v} \geq \inner{w,v}$ then insert $u$ into the stack and raise its depth by $1$. Otherwise, inset $w$ and raise its depth by $1$. 
        \end{enumerate}
        \item The stack is the hidden state that is transferred to the next iteration of the RNN.
    \end{enumerate}

    We now turn to the formal construction. Each node $v\in V$ in the tree will be embedded as a vector $\begin{pmatrix}
        \bx_v \\ \ell(v)
    \end{pmatrix}\in\reals^{d+1}$, where $\ell(v)$ is the depth of $v$. The hidden state of the RNN at the beginning will be $\bh = \bm{0}_{d(\log(n) + 1)}$. The input to the RNN will be the current input vector, concatenated to the hidden state. Throughout the proof we refer to $\tilde{\bx}$ the vector that is inputted to the RNN, namely the concatenation of the embedding of each node with the hidden state. We call the part of $\tilde{\bx}$ that contains the hidden state $\bh$ the stack, and each consecutive $d+1$ coordinates in it as a node that is was inputted to the stack. The reason is that this part of the input will simulate a stack where the RNN can only access its first and second nodes, namely the first $2d+2$ coordinates. The RNN will add at most $3$ coordinates to $\tilde{\bx}$, which will simulate flags with values that are either $0$ or $1$.
    The RNN contains the following layers:
    \begin{enumerate}
        \item \textbf{Layer 1:} Check whether the depth of the input node $v$ is the same or larger than the depth of the last node in the stack. If so, turn on a specific flag for this event.
        \item \textbf{Layer 2:} If the flag is turned on, insert $v$ into the stack by moving all the elements $\log(n)+1$ entries forward and putting $v$ at the top of the stack.
        \item \textbf{Layer 3:} If the flag is turned off, extract the last two nodes from the stack $u_1$ and $u_2$, and calculate $\inner{\bx_{u_1},\bx_v}, ~\inner{\bx_{u_2},\bx_v}$.
        \item \textbf{Layers 4 \& 5:} If the flag is turned off, insert $u_1$ or $u_2$ to the stack, whichever has the higher inner product with $v$. Also, increase the depth of this node by $1$.
    \end{enumerate}
     Note that the input to the RNN is of dimension $d(\log(n)+1) + d+1$, which is a concatenation of the embedding of the current input node and the hidden state. We will also need to following parameters:
    Let $c > 0$ be such that $ c > |a|$ for every $a\in \Gamma$, namely, it is larger than all possible entries of the $\bx_v$'s, and that $c > L$, the depth of the tree. We have that $c = O(n)$ which is the largest possible depth of the tree. Also, let $\epsilon > 0$ such that $\min_{a,b\in\Gamma} |a-b| < \epsilon$. We think about $\epsilon$ as a constant independent of $n$ and $d$, as it only depends on the possible values in $\Gamma$.
    Each layer is constructed in the following way: 
     
     \textbf{First layer}: We denote the input vector to the RNN is $\tilde{\bx}$. The first layer add a coordinate to the input. All the input coordinates are copied, while the last coordinate is constructed as: $\sigma( (\tilde{\bx})_{d+1} - (\tilde{\bx})_{2d+1} + 1) - \sigma( (\tilde{\bx})_{d+1} - (\tilde{\bx})_{2d+1} )$. Note that this added last coordinate is equal to $1$ if  $(\tilde{\bx})_{d+1} - (\tilde{\bx})_{2d+1} \geq 0$, meaning that the current node has an equal or larger depth that the last node in the stack, and $0$ if $(\tilde{\bx})_{d+1} - (\tilde{\bx})_{2d+1} \leq -1$. The copying of the input can be done by a ReLU network, since $\sigma(z) -\sigma(-z) = z$ for every $z\in\reals$. Thus, adding two identity matrices to the weights of the MLP with different signs, and adding their outputs together will copy the inputs.

     \textbf{Second layer:} Let $A = \begin{pmatrix}
         I_{d+1} & \pmb{0} \\
         I_{d(\log(n) +1 )} & \pmb{0}
     \end{pmatrix}$ be a $d(\log(n) +1 ) + d+1$ square matrix. The second layer will perform the following operation:
     \[
     \bx \mapsto \sigma\left(I\bx_{(1:-2)} - 2c I \bx_{(-1)} + c\pmb{1}\right) + \sigma\left(A\bx_{(1:-2)} - 2c I (-\bx_{(-1)} + 1) + c\pmb{1}\right) - c\pmb{1}~.
     \]
    This operation can be implemented by a $2$-layer network, where the constant $\pmb{1}$ vectors are added using the bias terms. This operation will apply the identity matrix to the inputs if the flag (defined in the previous layer) is $0$, and apply the $A$ matrix if it is $1$. The $A$ matrix will copy the first $d+1$ entries (the current input node), and will move also add it to the stack, while also moving all the other entries $d+1$ coordinates further down in the stack. The added $c\pmb{1}$ factor is so that if there are negative values in the entries of the inputs, they will not be removed by the ReLU, this factor is removed at the end to keep the original value. We will later on prove that the stack will not get overflown, meaning that we don't delete the last $d+1$ entries of it.

    \textbf{Third layer:} This layer will be used to approximate inner products. Assume that $v$ is a non-leaf node and let $u_1$ and $u_2$ be its two children. If it is a leaf node, this layer will not effect its input. 
    The layer will add two coordinates to its input vector, corresponding to an approximation of $\inner{\bx_{u_1},\bx}$ and $\inner{\bx_{u_2},\bx}$.
    
   By \Cref{lem:inner prod} there exists a $2$-layer neural network $\Ncal:\reals^{2d}\rightarrow\reals$ with width $O(d^2)$ such that $\max_{\bx,\by\in\Gamma^d} | \Ncal(\bx,\by) - \inner{\bx,\by}| \leq \frac{\epsilon}{4}$. We will stack two such networks that approximate up to an error $\frac{\epsilon}{4}$ the inner products $\inner{\bx_{u_1},\bx_v}, \inner{\bx_{u_2},\bx_v}$. If $\bx_{u_1}\neq \bx_{u_2}$, then by the assumption on $\epsilon$ we have that $|\inner{\bx_{u_1},\bx_v} - \inner{\bx_{u_2},\bx_v}| \geq \epsilon$. By the construction of $\Ncal$, we also have that:
   \[
   |\inner{\bx_{u_1},\bx_v} - \inner{\bx_{u_2},\bx_v}| \leq 
   |\Ncal(\bx_{u_1},\bx_v) - \Ncal(\bx_{u_2},\bx_v)|  + \frac{\epsilon}{2}~.
   \]
   This means that $|\Ncal(\bx_{u_1},\bx_v) - \Ncal(\bx_{u_2},\bx_v)| \geq \frac{\epsilon}{2}$. We use a similar construction to the previous layer to apply this calculation and add $\Ncal(\bx_{u_1},\bx_v)$ and $\Ncal(\bx_{u_2},\bx_v)$ only if the flag in the last coordinate is turned off. Otherwise, if the flag is turned on, we just copy the inputs. 

    \textbf{Fourth and Fifth layers:} In the last layer, we add an additional flag on whether $\inner{\bx_{u_1},\bx_v} > \inner{\bx_{u_2},\bx_v}$ and $0$ otherwise. This can be calculated by using the following function: 
    \[
    \bx\mapsto \frac{2}{\epsilon}\left(\sigma((\bx)_{-2} - (\bx)_{-2}) - \sigma((\bx)_{-2} - (\bx)_{-2} - \frac{\epsilon}{2})\right)~.
    \]
    By the argument from the previous layer, this function will provide the additional flag. All the other coordinates are copied as is.  We use another layer to pop out the first $2d+2$ coordinates from the stack, meaning that all the entries in the stack are moved upward by $d+1$ coordinates, and the first $d+1$ are changed to either $\begin{pmatrix}
        \bx_{u_1} \\ \ell(u_1) - 1
    \end{pmatrix}$ or $\begin{pmatrix}
        \bx_{u_2} \\ \ell(u_2) -1
    \end{pmatrix}$, depending on whether the new flag is $1$ or $0$. 
    Again, we apply this operation only if the flag from the first layer is turned off, by using a similar construction to the second layer. The rest of the coordinates are copied, and the coordinates of the flags that were used throughout the computation are removed. The output will be the hidden state to the next recurrence with the dimension as the hidden state that was inputted.

    \textbf{Proof of correctness:} First, it is clear from the construction that applying the RNN on a single input node will perform the pseudo-code described in the beginning of the proof.
    We will show the the construction outputs the correct answer to the CRQ. When a node is inputted to the RNN, according to the ordering of the memory-rank sort, there are three options:
    \begin{enumerate}
        \item  Its depth is equal to the the depth of the first node in the stack. This can only happen to leaf nodes, in which case, according to the memory-rank sorting, the node in the stack must be a sibling of the inputted node.
        \item Its depth is larger than the depth of the first node in the stack. In this case, the first node in the stack will be pushed down to the second place after applying the RNN. 
        \item Its depth is smaller by exactly $1$ than the last node in the stack. In which case, it must be its parent, and the second node in the stack is its sibling. After applying the RNN, both nodes are popped out, and the node that correctly answer the query corresponding to $v$ is inputted back to the stack, but with an updated depth. In this case, all the descendants of the inputted node have already being processed by the RNN, and the next inputted node will necessarily be its parent or sibling.
    \end{enumerate}
    
    By induction over the nodes of the tree, and using these three options we get that after applying the RNN to the root (which is the last inputted node), the only node left in the stack is the answer to the question given by the root, which is the answer to the CRQ.

    We are left with showing that the stack does not overflow. We will show that the maximal number of nodes in the stack is bounded by $\mr(T)+1$. Since the number of coordinates in the representation of each node is $d+1$, using \Cref{lem:mr bound} finishes the proof. Let $v$ be some node with $\mr(v)=k$. We will first show by induction on $k$ that at any point in time, before processing $v$,
    the maximal number of nodes in the stacks that are also descendants of $v$ is bounded by $k$. If $k=0$ then $v$ is a leaf a doesn't have any children. For the induction step, assume $k > 0$, and let $u_1,u_2$ be the two children of $v$. If $\mr(u_1) = \mr(u_2)$, then by the definition of memory rank, they are both equal to $k-1$. Assume w.l.o.g that $u_1$ is processed before $u_2$. Note that in this case, $u_2$ is processed only after all the descendants of $u_1$ are processed. Then, by the induction step, the maximal number of nodes in the stacks that are also descendants of $u_2$ is bounded by $k-1$. Hence, the maximal number of nodes in the stack that are descendants of $v$ is $k$, which includes $u_1$. If $\mr(u_1) \neq \mr(u_2)$, assume w.l.o.g that $\mr(u_1) > \mr(u_2)$, then $u_1$ is processed before $u_2$ (resp. $u_2$ before $u_1$). In this case, by the induction case, again the number of nodes in the stack the are descendants of $v$, are either descendants of $u_1$, in which case no nodes that are descendants of $u_2$ have being processed, or descendants of $u_2$, in which case $u_1$ is in the stack, as well as at most $k-1$ descendants of $u_2$. This finished the induction. 

    Now, we will show that the number of nodes that are not descendants of $v$ and are in the stack is bounded by $1 + \mr(T) - k$. This will be by induction over $\Pcal^{(i)}(v)$, namely the ancestors of $v$. We assume that $\mr(k) = v$. If $\mr(\Pcal(v)) = k$, then $v$ is processed before its sibling, which by the induction we showed before, the number of nodes in the stack that are descendant of $\Pcal^{(i)}(v)$ is bounded by $k$. If $\mr(\Pcal(v)) = k + 1$, then the sibling of $v$ either have a memory-rank of $k$ or larger than $k$. In both cases it can be processed before $v$ (as well as all its descendants), hence the number of nodes in the stack that are descendants of $\Pcal(v)$ is bounded by $k+1$. Using the same induction argument as before, applied for the sibling of $\Pcal^{(i)}(v)$, we get that the number of nodes the are descendants of $v$ that are in the stack is bounded by $\mr( \Pcal^{(i+1)}(v)) + 1$. 
    
    Combining the two inductive arguments above, we get that when $v$ is processed into the RNN, the number of nodes in the stack is bounded by $\mr(T)+1$. This finishes the correctness proof.

    By induction on $\mr(v)$. If $\mr(v) = 0$, then it is a leaf. By \Cref{alg:mr sort}, since $v$ doesn't have children, the only node that is processed before it can be its sibling. When its sibling was processed necessarily all its children were already processed, hence there is at most $1$ node in the stack. Assume that if $\mr(u) = k$, then there are at most $k+1$ nodes in the stack, and let $v$ with $\mr(v) = k+1$. Denote by $u_1$ and $u_2$ the children of $v$, and assume w.l.o.g that $\mr(u_1) \geq \mr(u_2)$. If $\mr(u_1) > \mr(u_2)$, then $\mr(v) = \mr(u_1)$. This means that $u_1$ was processed before $u_2$. Hence, $u_1$ is in the stack, and $u_2$ was processed right before $v$, which by induction shows that there are at most $k+1$ nodes in the stack. If $\mr(u_1) = \mr(u_2)$, the by the definition of memory-rank, it necessarily happen that they are both equal to $k$. Assume w.l.o.g that $u_1$ is processed before $u_2$ (the order of them being processed is chosen arbitrarily in this case). Hence, when processing $u_2$ there are at most $k$ nodes. 
\end{proof}

\subsection{Proof of \Cref{thm:mr lower bound}}
\label{app:proof:thm:mr lower bound}

We begin with proving the theorem for balanced binary trees and then generalize to other trees.
Let $T$ be a rooted balanced binary tree of size $n$ with a given ordering of the nodes denoted as: $v_1,\dots,v_n$. Let $k:=\log_2(n+1)$.
We define the vectors $\bt_0 = \begin{pmatrix}
    0 \\ 1
\end{pmatrix},~\bt_1  = \begin{pmatrix}
    1 \\ 0
\end{pmatrix},~\bt_{\texttt{null}}  = \begin{pmatrix}
    0 \\ 0
\end{pmatrix}$. We now define a set of CRQs given by $\bx_{v_i} \in \{\bt_0, \bt_1, \bt_{\texttt{null}}\}$ for all $i\in[n]$. We will show that given an RNN that solves all CRQs with this tree and this ordering of the nodes must have a hidden state containing $\Omega(\log(n))$ bits. 

As in \Cref{thm:crq nc1 hard}, an intuitive way to look at these CRQs is as Boolean formulas. For leaf nodes $\bt_0$ represents $0$ and $\bt_1$ represents $1$. For non-leaf nodes, $\bt_0$ represents $\land$ and $\bt_1$ represents $\lor$. 
The vector $\bt_{\texttt{null}}$ is used by a hidden node to pass values up the tree without changing them, as described below.

Let $u_1$ be the last leaf node in the prescribed ordering. Denote its ancestors as $u_2,\dots,u_k$, where $u_k$ is the root. Also, for each $u_i$ with $i\neq 1$ denote its other child as $w_{i-1}$ (i.e. the sibling of $u_{i-1}$). Suppose that the current input to the RNN is $u_1$. We will show that the RNN needs at least $k$ bits in its memory at this point in the calculation to solve the CRQ.

We consider the set of CRQs defined by the following rules:
\begin{enumerate}
    \item Each of the nodes $w_1$ and $u_1,\dots,u_k$ is equal to either $\bt_0$ or $\bt_1$. 
    \item For each node $w_2,\dots,w_{k-1}$, all its leaf descendants are identical to one another. Either they are all $\bt_0$, or they are all $\bt_1$. Its non-leaf descendants, including itself, are all equal to $\bt_{\texttt{null}}$.
\end{enumerate}
To solve the sub-question represented by $u_2$, the RNN must use $1$-bit of memory to remember $w_1$, since by assumption, $w_1$ precedes $u_1$ in the sequence. 
Similarly, to solve the sub-question represented by $u_2$, the RNN must know the answer to $w_{2}$. Because of rule 2, each subquestion in the subtree rooted at $w_2$ results in a tie. Because all the leaves in this subtree are identical, the answer to the subquestion rooted at $w_2$ equals any of its descendant leaves. These leaves all precede $u_1$ in the sequence, so we must use 1-bit of memory to store their value. For each node $u_2,\dots,u_k$, the RNN must store at least $1$ bit of memory at the point when it processes $u_1$. In total, this amounts to $\log(n)-1$ bits of memory that needs to be transferred through the hidden state of the RNN.

Now, let $T$ be some rooted tree of size $n$, which is not necessarily a balanced binary tree. Denote by $\ell: = \mr(T)$, and recall this means that the root has a memory-rank of $\ell$.
We will find a subset of $T$ that forms a balanced binary tree whose depth equals $\ell$. We will construct all the other nodes to simply copy their inputs up the tree, and appeal to the case of a balanced binary tree proved above.

We first show how to find such a subset. If the root has two children with a memory rank of $\ell-1$, denote the root by $u_1$. Otherwise, exactly one of its children has a memory-rank of $\ell$; search recursively through \emph{its} children until a node with two children of rank $\ell - 1$ is found and label it $u_1$. We now apply the above procedure recursively on the two children of $u_1$. This yields two nodes, $u_2$ and $u_3$, each of which has two children with a memory rank of $\ell-2$. We apply this procedure recursively until we defined nodes $u_1,\dots,u_{2^\ell}$, where each node is either a leaf or has two children with the same memory-rank as one another. By the definition of memory rank, this process will generate a subset that forms a (possibly non-consecutive) complete binary tree.

We now construct a family of CRQs on the original tree whose answer always equals that of the complete binary subset $\{u_1, \ldots, u_{2^\ell}\}$.
Nodes in the subset are labeled with $\bt_0,\bt_1,\bt_{\texttt{null}}$, as above, except we append an extra 1 to each of these vectors:
\begin{equation}
\bt_0 = \begin{pmatrix}0 \\ 1 \\ 1\end{pmatrix},
\bt_1 = \begin{pmatrix}1 \\ 0 \\ 1\end{pmatrix},
\bt_{\texttt{null}} = \begin{pmatrix} 0 \\ 0 \\ 1\end{pmatrix},
\end{equation}
Each node of the original tree that is not in the subset is labeled with one of the following two vectors:
$\bt_+ = \begin{pmatrix}
    0 \\ 0 \\ 1
\end{pmatrix}$ or $\bt_- = \begin{pmatrix}
    0 \\ 0 \\ -1
\end{pmatrix}$.
Specifically, leaf nodes that are not in the subset are labeled with $\bt_-$, and non-leaf nodes are labeled with $\bt_+$. To see that the answer to this CRQ equals that of the CRQ corresponding to the balanced binary subset, it suffices to show that the vectors $\bt_+$ and $\bt_-$ are never chosen over $\bt_0$ and $\bt_1$ at any subquestion.
First, we show that $\bt_-$ is never chosen over $\bt_0$ and $\bt_1$. This can be seen be enumerating all the possible values that the parent (non-leaf) node can take on ($\bt_0, \bt_1, \bt_{\texttt{null}}, \bt_+$) and seeing that each prefers $\bt_0$ and $\bt_1$ to $\bt_-$.
Second, $\bt_+$ is never even compared to $\bt_0$ and $\bt_1$, because no leaves are labeled with $\bt_+$.

We now define a CRQ over $u_1,\dots,u_{2^\ell}$ the same way as in the balanced binary tree case using $\bt_0,\bt_1,\bt_{\texttt{null}}$.
By construction, the depth of this synthetic tree is $\ell = \mr(T)$, so the RNN needs memory of size $\mr(T)$.



Finally, if an RNN solves all CRQs on trees of size $n$, it also solves them on balanced binary trees of size $n$. Thus, the number of required bits in the hidden state of the RNN is $\Omega(\log(n))$.


\section{Extension of the RNN solution to non-binary trees}\label{app:non binary}

\subsection{Converting non-binary to binary trees} \label{subsec:non binary to binary}

Suppose we have a CRQ over a tree $T$ with $n$ nodes and max degree $k$. We can transform it to an equivalent CRQ over a binary tree $T'$ with $2n$ nodes and max degree $2$. By equivalent we mean that the answer to the two CRQs are the same. 

The construction is as follows: Let $v$ be some node with degree $k$ and denote its leaves by $u_1,\dots,u_k$. Suppose that $k=2^\ell$ for some integer $\ell$. We create a balanced binary tree with at most $2k$ nodes and depth $\ell$. The leaves will have vectors $\bx_{u_1},\dots,\bx_{u_k}$ and all the non-leaf nodes have a vector equal to $\bx_v$. It is easy to see that the answer to the sub-question of the root of this tree is the same as the answer to the sub-question of the node $v$ from the original tree. The number of nodes is increased by a factor of at most $2$. If $k$ is not a power of $2$, we can duplicate the last leaf node until the number of leaves is a power of $2$ and use the same construction.

Doing this for every node increases the size of the tree by a factor of at most $2$, and the resulting CRQ will have the same answer as the original CRQ.

\subsection{Memory-rank for non-binary trees}

In this sub-section we explain how to extend some of the results from \Cref{sec:rnn} to non-binary trees. We first define memory-rank for general trees:

\begin{definition}[Memory Rank for non-binary trees]
\label{def:mem_rank non-binary}
    Let $T= (V,E)$ be a rooted tree. The \textbf{memory rank} of a node $v\in V$ is defined recursively as: $\mr(v) = 0$ if $v$ is a leaf. If $v$ is not a leaf, suppose it has $k$ children denoted by $u_1,\dots,u_k$ where $\mr(u_1) \geq \dots \geq\mr(u_k)$, then:
    \begin{align*}
        \mr(v) &= \max\left(\mr(u_1),\mr(u_2) + 1,\dots, \mr(u_k) + k-1\right)
    \end{align*}
\end{definition}

It is straightforward to see that this definition generalizes \Cref{def:mem_rank}. Also, \Cref{alg:mr sort} readily generalizes to \Cref{def:mem_rank non-binary}, since it has no dependence on the degree of each node.

Extending \Cref{thm:rnns solves crq} to general trees with maximal degree of $k$ can be done in a straightforward way by having an RNN with hidden dimension of $O(dk\log(n))$ and $O(k)$ layers. The idea is to add for each node an additional coordinate representing its degree. Now, in the proof of \Cref{thm:rnns solves crq}, when the last two nodes are extracted from the stack (layer $3$ in the construction), we split this into $k$ layers. Each layer checks the degree of the parent node from $2$ until $k$ and for degree $i$ extracts the last $i$ nodes from the stack. This can be implemented by a ReLU network of depth $O(k)$. Finally, the node with the highest inner product to the parent is inserted back to the stack with an updated depth (similarly to layers $4$ and $5$ from the proof of \Cref{thm:rnns solves crq}). 

We believe it is also possible to extend this construction to having depth $O(1)$ and hidden dimension $O(d\log(n))$ for general trees by changing the sorting algorithm. This can be done by splitting the argmax operation over $k$ inputs into $k-1$ argmax operations, each over $2$ inputs, where these operations are nested. We leave this construction for future works.

\section{Proofs from \Cref{sec:cot solves crq}}
\label{app:proofs from cot}
    The main idea of the construction is to iteratively use the self-attention layer constructed in \Cref{thm:deep transformer}, while wrapping it with a construction that allows outputting CoT tokens in the right order.

    By \Cref{lem:orth vectors pm 1} there exist vectors $\bz_1,\dots,\bz_n\in \{\pm 1\}^{4\log(n)}$ such that $|\inner{\bz_i,\bz_j}| \leq 3\log(n)$ for any $i\neq j$. We will use these vectors twice, and for two different use cases. The first one is similar to the construction of \Cref{thm:deep transformer}. Namely, for each node $v\in V$ that is not a leaf, we set one of the vectors $\bz_i$ to be corresponded with this node and denote it as $\bz_v$.

    We also assume that the tree has a reverse BFS ordering. This means that we do a standard BFS ordering of the tree, and order of node $i$ in the reverse order is $n-i+1$. For a node $v$, let $I(v)$ be be its place in the reverse BFS ordering. We also correspond one of the vectors defined above to each place in this order we denote this vector as $\bw_{I(v)}$. For the root $v_r$ we have $I(v_r)=n$, and define $\bw_{I(v_r) + 1} = \bw_1$.
    We use a different notation for the BFS ordering to not confuse them with the other set of positional encoding, although this is the same set of vectors. It is also possible to find another set of vectors with the same property, although it won't matter for the construction.
    The embedding for a leaf node $u$ and a non-leaf node $v$ are defined as:
        \begin{equation}
        \bt_{u} := \begin{pmatrix}
            \bx_u \\ \bm{0}_d \\ \bz_{\Pcal(u)} \\ \bm{0}_{4\log(n)} \\ \bw_{I(u)} \\ \bw_{I(u) + 1} \\ 0
        \end{pmatrix},~~ 
        \bt_{v} := \begin{pmatrix}
            \bx_v \\ \bm{0}_d \\ \bz_{v} \\ \bz_{\Pcal(v)} \\ \bw_{I(v)} \\ \bw_{I(v) + 1} \\ 1
        \end{pmatrix}\in\reals^{2d+16\log(n) + 1}~.
    \end{equation}
    As in \Cref{thm:deep transformer}, all the coordinates of the positional encoding are in $\{\pm 1\}$, and the matrices that operate over these coordinates will have all their entries in $\{-1,0,1\}$.
    We will first give an intuitive explanation of the embedding and how the transformer will work. The embedding first contains the vector $\bx$ corresponding to the node. It also contains its own positional encoding $\bz_v$ if its a non-leaf, and the positional encoding of its parent. Next, it contains the vector representing its place in the reverse BFS ordering, as well as the subsequent vector in the order. The last coordinate is a flag for whether a node is a leaf.

    The transformer will contain $2$ self-attention layers, and an MLP with $O(1)$ layers after each self-attention layer. The first self-attention layer and the MLP afterwards is similar to the construction in \Cref{thm:deep transformer}. Their goal is to solve the sub-questions in the CRQ. This layer will use the $\bz$'s positional encoding vectors. The second self-attention layer will make sure that the last CoT token that was created will attend to the next token corresponding to the node after it in the reverse BFS ordering. This way, each CoT token will provide an answer to one sub-question of the CRQ, and using the reverse BFS order we make sure that any subsequent questions will have the questions of their children already solved.

    We now turn to the formal construction: Let $c \geq 1$ be some constant such that $\norm{\bx}^2 \leq c$ for every $\bx\in\Gamma^d$. The matrices for the first self-attention layer are equal to:
    \begin{align*}
        &K= \begin{pmatrix}
            I_d & & & &\\
            & \pmb{0}_{d\times d} & & & \\
            & & \sqrt{c}\cdot I_{4\log(n)}&  &\\
            &  & & \pmb{0}_{12\log(n)\times 12\log(n)}& \\
            & & & & \sqrt{3c}
        \end{pmatrix}~\\
        &Q = \begin{pmatrix}
            I_d & & & &\\
            & \pmb{0}_{d\times d} & & &\\
            & & \sqrt{c}\cdot I_{4\log(n)}&  &\\
            &  & & \pmb{0}_{12\log(n)\times 12\log(n)} &\\
            & & & & -\sqrt{3c}
        \end{pmatrix}~, \\
        &V = \begin{pmatrix}
            \pmb{0}_{d\times d} & I_d & \\
            I_d & \pmb{0}_{d\times d} & \\
            & & \pmb{0}_{16\log(n)\times 16\log(n) + 1}
        \end{pmatrix}~.
    \end{align*}

    We will first show to which each token can attend to. Let $v$ be some non-leaf node, and $u$ a child of $v$ which is a leaf, and $w$ some other node. We will show that $v$ can only attend to $u$:
    \begin{align*}
        \bt_v^\top K^\top Q \bt_u &=   \inner{\bx_v,\bx_u} + c\norm{\bz_v}^2 \geq 4c\log(n) - c\\
        \bt_v^\top K^\top Q \bt_v &= \norm{\bx_v}^2 + c\norm{\bz_v}^2 - 3c \leq 4c\log(n) - 2c   \\
        \bt_v^\top K^\top Q \bt_w &=  \inner{\bx_v,\bx_u} + c\inner{\bz_v,\bz_u} -3c \leq 4c\log(n) - 2c  \\   
    \end{align*}
    This shows that $\bt_v^\top K^\top Q \bt_u \geq \bt_v^\top K^\top Q \bt_v$ and $\bt_v^\top K^\top Q \bt_u \geq \bt_v^\top K^\top Q \bt_w$. In particular, $v$ will attend to its child with $\max_{u\in\Ccal(v)}\inner{\bx_v,\bx_u}$, which is the correct answer to the sub-question represented by $v$. After applying the $V$ matrix and adding the residual connection, the to embedding of $v$ is equal to:
    \[
    \tilde{\bt}_v = \begin{pmatrix}
            \bx_v \\ \bx_u \\ \bz_{v} \\ \bz_{\Pcal(v)} \\ \bw_{I(v)} \\ \bw_{I(v) + 1} \\ 1
        \end{pmatrix}~,
    \]
    where $\bx_u$ maximizes the inner product with $\bx_v$ over its children. The first layer of the MLP doesn't change the input. This can be easily done with a ReLU network since $\sigma(z) - \sigma(-z) = z$ for every $z\in\reals$.

    The second self-attention layer will have the following weights:
    \begin{align*}
        & K = \begin{pmatrix}
            \pmb{0}_{2d\times 2d} & &  & \\
            &  \pmb{0}_{4\log(n) \times 4\log(n)}
            & I_{4\log(n)} & \\ 
            & \pmb{0}_{4\log(n) \times 4\log(n)} 
            &\pmb{0}_{4\log(n) \times 4\log(n)} &  \\
            &  
            & & 0 \\
        \end{pmatrix},\\
        &Q = \begin{pmatrix}
            \pmb{0}_{2d\times 2d} & &  & \\
            & I_{4\log(n)} 
            &\pmb{0}_{4\log(n) \times 4\log(n)} & \\ 
            & \pmb{0}_{4\log(n) \times 4\log(n)} 
            &\pmb{0}_{4\log(n) \times 4\log(n)} &  \\
            &  
            & & 0 \\
        \end{pmatrix}~, \\
        & V = \begin{pmatrix}
            \pmb{0}_{d\times d} & I_d & & & \\
            \pmb{0}_{d\times d} & \pmb{0}_{d\times d} & & &  \\
             &  & \pmb{0}_{4\log(n) \times 4\log(n)} & I_{4\log(n)} &  \\
             & & \pmb{0}_{4\log(n) \times 4\log(n)} & \pmb{0}_{4\log(n) \times 4\log(n)} & \\
             & & & & I_{8d\log(n) + 1}
        \end{pmatrix}~.
    \end{align*}
    The second MLP will again just copy the input, and we will not use the residual connection for the second layer \footnote{It is always possible to ignore the residual connection (if it exists) by doubling the embedding dimension. This can be done by adding zero coordinates, then use the $V$ matrix to copy the original vector intro those added coordinates, and use another linear layer from the MLP to move back the vector to the first coordinates while removing what was added from the residual connection.}.

    We now turn to prove the correctness of the construction. The second self-attention layer forces each token $v$ to attend to the token $u$ with $I(u) = I(v) +1$, namely the consecutive token in the reverse BFS order. The $V$ matrix of the second self-attention layer transform the embedding of this token to be similar to that of an embedding of a node. Suppose we constructed $k-1$ CoT tokens already, and for a node $v$, $I(v) = k$. By the ordering, this means that all the descendants of $v$ have being processed. Also, if $v$ have descendants that are non-leaves, then the first layer correctly solved the corresponding sub-question to them, and they were outputted as CoT tokens with a similar embedding to a leaf node. Hence, in the current iteration, the token corresponding to $v$ will necessarily attend the token corresponding to its child that solves the sub-question. In addition, the last token CoT token that was created will necessarily attend to the token corresponding to $v$ by the construction of the second self-attention layer. This means that the new CoT token will be the answer to the sub-question corresponding to $v$, with and embedding similar to a leaf. This is true for any node $v$, hence the last CoT token will contain the answer to the root, which is the answer to the CRQ.

\section{General CRQs}\label{app:general crq}

In our definition of CRQs (\Cref{def:crq}) each sub-question is defined as the argmax over inner products. There are two main reasons for this choice: (1) The argmax function aligns well with the attention layers in transformers, allowing for simple constructions, and (2) The argmax function is flexible enough to capture many natural problems as special cases, such as evaluating Boolean expressions. 
However, by allowing for other operations besides argmax, CRQs can be generalized to include many more tasks. We define the class of generalized CRQ problems formally below. For simplicity we only consider binary trees. (Given a CRQ over a non-binary tree, we can always convert it into an equivalent CRQ over a binary tree with at most twice as many nodes; see \Cref{app:non binary}.)

\begin{definition}\label{def:general crq}
    A \textbf{General Compositional Reasoning Question (GCRQ)} is a rooted binary tree $T = (V,E)$ with root $v_r\in V$, and a set of operators $f_1,\dots,f_k:\reals^{3d}\rightarrow\reals^d$. Each node $v\in V$ is labeled by a vector $\bx_v\in \Gamma^d$, and non-leaf nodes are also labeled by an operator's index $i_v\in[k]$. Here $\Gamma\subset \reals$ is some finite vocabulary of constant size and $d\in\naturals$. 

    The \textbf{answer} to the GCRQ is defined as $\Acal(v_r)$ where the function $\Acal$ is defined recursively: For a leaf $u\in V$ we define $\Acal(u) = \bx_u$. For a non-leaf node $v\in V$ with children $u_1,u_2$ we define:
    \[
    \Acal(v) = f_{i_v}(\bx_v,\Acal(u_1),\Acal(u_2))~.
    \]
    We refer to every node $v\in V$ which is not a leaf as a \textbf{sub-question}.
\end{definition}

As a concrete example, we can represent modular arithmetic problems as a special case of GCRQs. Let $\Gamma= \{0,1,\dots, p-1\}$ for some prime $p$. Let the dimension of the embedding vectors be $d=1$. We define the operators: $f_+,f_-,f_\times,f_\div$ as:
\begin{align*}
   & f_+=(\bx_v,\bx_{u_1},\bx_{u_2}) = \bx_{u_1} + \bx_{u_2},\quad f_-=(\bx_v,\bx_{u_1},\bx_{u_2}) = \bx_{u_1} - \bx_{u_2},\\
   & f_\times=(\bx_v,\bx_{u_1},\bx_{u_2}) = \bx_{u_1} \times \bx_{u_2},\quad f_\div=(\bx_v,\bx_{u_1},\bx_{u_2}) = \bx_{u_1} \div \bx_{u_2}~.
\end{align*}
(In this construction, the non-leaf labels $\bx_v$ are not used.)
Clearly, any arithmetic expression can be converted into a GCRQ whose tree structure matches the order of operations in the expression.
Furthermore, each of the four arithmetic operators can be represented by a ReLU network. For instance, \citet[Lemma C.5, Theorem D.1]{feng2024towards} implements them using a lookup table, since the inputs come from a finite field.




All of our constructions can be adapted to work for GCRQs.
Consider $k$ operators $f_1,\dots,f_k:\reals^{3d}\rightarrow\reals^d$, where each operator can be implemented by a ReLU network.
Note that the only place in the proofs of \Cref{thm:deep transformer} and \Cref{thm:cot solves crq} we use the fact that the operator is argmax is where we use a single self-attention layer to calculate inner products. Instead, we can modify this attention layer so that each parent node attends to both its children, with the output being a concatenation of their vectors. In more detail, assume that each two siblings $u_1$ and $u_2$ in the tree are labeled by vectors $\begin{pmatrix}
    \bx_{u_1} \\ \pmb{0}_d
\end{pmatrix}$ and $\begin{pmatrix}
     \pmb{0}_d \\ \bx_{u_2} 
\end{pmatrix}$. Then using the positional embeddings, we can force their parent to attend to both of them with the same attention score just as in the proofs of \Cref{thm:deep transformer} and \Cref{thm:cot solves crq}. We also construct the MLP so that its input will include the index of the operator, $i_v$. The MLP will simulate an ''if`` statement over the $k$ operators and apply the relevant one over the vectors $\bx_v,\bx_{u_1},\bx_{u_2}$. This ''if`` statement can be simulated by a $k$-layer ReLU network. We need to assume that $k$ is independent of $n$, which is natural since the number of possible operators (e.g. $4$ in arithmetic problems) is independent of the input length.

A similar construction will work for the proof of \Cref{thm:rnns solves crq}, where here the MLP will output the last two inputs from the stack and apply the operator $i_v$. Again, it will increase the depth of the MLP by $k$. We leave exact constructions for future work.

\section*{NeurIPS Paper Checklist}

\begin{enumerate}

\item {\bf Claims}
    \item[] Question: Do the main claims made in the abstract and introduction accurately reflect the paper's contributions and scope?
    \item[] Answer: \answerYes{} 
    \item[] Justification: Our abstract and introduction highlight the main results of each section: the definition of CRQs, the lower bounds (\Cref{thm:crq not tc0}, \Cref{thm:mr lower bound}), and the upper bounds (\Cref{thm:deep transformer}, \Cref{thm:rnns solves crq}, \Cref{thm:cot solves crq}).
    \item[] Guidelines:
    \begin{itemize}
        \item The answer NA means that the abstract and introduction do not include the claims made in the paper.
        \item The abstract and/or introduction should clearly state the claims made, including the contributions made in the paper and important assumptions and limitations. A No or NA answer to this question will not be perceived well by the reviewers. 
        \item The claims made should match theoretical and experimental results, and reflect how much the results can be expected to generalize to other settings. 
        \item It is fine to include aspirational goals as motivation as long as it is clear that these goals are not attained by the paper. 
    \end{itemize}

\item {\bf Limitations}
    \item[] Question: Does the paper discuss the limitations of the work performed by the authors?
    \item[] Answer: \answerYes{} 
    \item[] Justification: Each of our theorems clearly states all assumptions. \Cref{sec:conclusion} discusses the questions raised by our work that remain open.
    \item[] Guidelines:
    \begin{itemize}
        \item The answer NA means that the paper has no limitation while the answer No means that the paper has limitations, but those are not discussed in the paper. 
        \item The authors are encouraged to create a separate "Limitations" section in their paper.
        \item The paper should point out any strong assumptions and how robust the results are to violations of these assumptions (e.g., independence assumptions, noiseless settings, model well-specification, asymptotic approximations only holding locally). The authors should reflect on how these assumptions might be violated in practice and what the implications would be.
        \item The authors should reflect on the scope of the claims made, e.g., if the approach was only tested on a few datasets or with a few runs. In general, empirical results often depend on implicit assumptions, which should be articulated.
        \item The authors should reflect on the factors that influence the performance of the approach. For example, a facial recognition algorithm may perform poorly when image resolution is low or images are taken in low lighting. Or a speech-to-text system might not be used reliably to provide closed captions for online lectures because it fails to handle technical jargon.
        \item The authors should discuss the computational efficiency of the proposed algorithms and how they scale with dataset size.
        \item If applicable, the authors should discuss possible limitations of their approach to address problems of privacy and fairness.
        \item While the authors might fear that complete honesty about limitations might be used by reviewers as grounds for rejection, a worse outcome might be that reviewers discover limitations that aren't acknowledged in the paper. The authors should use their best judgment and recognize that individual actions in favor of transparency play an important role in developing norms that preserve the integrity of the community. Reviewers will be specifically instructed to not penalize honesty concerning limitations.
    \end{itemize}

\item {\bf Theory assumptions and proofs}
    \item[] Question: For each theoretical result, does the paper provide the full set of assumptions and a complete (and correct) proof?
    \item[] Answer: \answerYes{} 
    \item[] Justification: Formal theorem statements include all assumptions. Proofs appear in \Cref{app: proofs from deep transformers,app:memory rank,app:proofs from rnn,app:proofs from cot}.
    \item[] Guidelines:
    \begin{itemize}
        \item The answer NA means that the paper does not include theoretical results. 
        \item All the theorems, formulas, and proofs in the paper should be numbered and cross-referenced.
        \item All assumptions should be clearly stated or referenced in the statement of any theorems.
        \item The proofs can either appear in the main paper or the supplemental material, but if they appear in the supplemental material, the authors are encouraged to provide a short proof sketch to provide intuition. 
        \item Inversely, any informal proof provided in the core of the paper should be complemented by formal proofs provided in appendix or supplemental material.
        \item Theorems and Lemmas that the proof relies upon should be properly referenced. 
    \end{itemize}

    \item {\bf Experimental result reproducibility}
    \item[] Question: Does the paper fully disclose all the information needed to reproduce the main experimental results of the paper to the extent that it affects the main claims and/or conclusions of the paper (regardless of whether the code and data are provided or not)?
    \item[] Answer: \answerNA{} 
    \item[] Justification: This paper does not include experiments.
    \item[] Guidelines:
    \begin{itemize}
        \item The answer NA means that the paper does not include experiments.
        \item If the paper includes experiments, a No answer to this question will not be perceived well by the reviewers: Making the paper reproducible is important, regardless of whether the code and data are provided or not.
        \item If the contribution is a dataset and/or model, the authors should describe the steps taken to make their results reproducible or verifiable. 
        \item Depending on the contribution, reproducibility can be accomplished in various ways. For example, if the contribution is a novel architecture, describing the architecture fully might suffice, or if the contribution is a specific model and empirical evaluation, it may be necessary to either make it possible for others to replicate the model with the same dataset, or provide access to the model. In general. releasing code and data is often one good way to accomplish this, but reproducibility can also be provided via detailed instructions for how to replicate the results, access to a hosted model (e.g., in the case of a large language model), releasing of a model checkpoint, or other means that are appropriate to the research performed.
        \item While NeurIPS does not require releasing code, the conference does require all submissions to provide some reasonable avenue for reproducibility, which may depend on the nature of the contribution. For example
        \begin{enumerate}
            \item If the contribution is primarily a new algorithm, the paper should make it clear how to reproduce that algorithm.
            \item If the contribution is primarily a new model architecture, the paper should describe the architecture clearly and fully.
            \item If the contribution is a new model (e.g., a large language model), then there should either be a way to access this model for reproducing the results or a way to reproduce the model (e.g., with an open-source dataset or instructions for how to construct the dataset).
            \item We recognize that reproducibility may be tricky in some cases, in which case authors are welcome to describe the particular way they provide for reproducibility. In the case of closed-source models, it may be that access to the model is limited in some way (e.g., to registered users), but it should be possible for other researchers to have some path to reproducing or verifying the results.
        \end{enumerate}
    \end{itemize}

\item {\bf Open access to data and code}
    \item[] Question: Does the paper provide open access to the data and code, with sufficient instructions to faithfully reproduce the main experimental results, as described in supplemental material?
    \item[] Answer: \answerNA{} 
    \item[] Justification: This paper does not include experiments.
    \item[] Guidelines:
    \begin{itemize}
        \item The answer NA means that paper does not include experiments requiring code.
        \item Please see the NeurIPS code and data submission guidelines (\url{https://nips.cc/public/guides/CodeSubmissionPolicy}) for more details.
        \item While we encourage the release of code and data, we understand that this might not be possible, so “No” is an acceptable answer. Papers cannot be rejected simply for not including code, unless this is central to the contribution (e.g., for a new open-source benchmark).
        \item The instructions should contain the exact command and environment needed to run to reproduce the results. See the NeurIPS code and data submission guidelines (\url{https://nips.cc/public/guides/CodeSubmissionPolicy}) for more details.
        \item The authors should provide instructions on data access and preparation, including how to access the raw data, preprocessed data, intermediate data, and generated data, etc.
        \item The authors should provide scripts to reproduce all experimental results for the new proposed method and baselines. If only a subset of experiments are reproducible, they should state which ones are omitted from the script and why.
        \item At submission time, to preserve anonymity, the authors should release anonymized versions (if applicable).
        \item Providing as much information as possible in supplemental material (appended to the paper) is recommended, but including URLs to data and code is permitted.
    \end{itemize}

\item {\bf Experimental setting/details}
    \item[] Question: Does the paper specify all the training and test details (e.g., data splits, hyperparameters, how they were chosen, type of optimizer, etc.) necessary to understand the results?
    \item[] Answer: \answerNA{} 
    \item[] Justification: This paper does not include experiments.
    \item[] Guidelines:
    \begin{itemize}
        \item The answer NA means that the paper does not include experiments.
        \item The experimental setting should be presented in the core of the paper to a level of detail that is necessary to appreciate the results and make sense of them.
        \item The full details can be provided either with the code, in appendix, or as supplemental material.
    \end{itemize}

\item {\bf Experiment statistical significance}
    \item[] Question: Does the paper report error bars suitably and correctly defined or other appropriate information about the statistical significance of the experiments?
    \item[] Answer: \answerNA{} 
    \item[] Justification: This paper does not include experiments.
    \item[] Guidelines:
    \begin{itemize}
        \item The answer NA means that the paper does not include experiments.
        \item The authors should answer "Yes" if the results are accompanied by error bars, confidence intervals, or statistical significance tests, at least for the experiments that support the main claims of the paper.
        \item The factors of variability that the error bars are capturing should be clearly stated (for example, train/test split, initialization, random drawing of some parameter, or overall run with given experimental conditions).
        \item The method for calculating the error bars should be explained (closed form formula, call to a library function, bootstrap, etc.)
        \item The assumptions made should be given (e.g., Normally distributed errors).
        \item It should be clear whether the error bar is the standard deviation or the standard error of the mean.
        \item It is OK to report 1-sigma error bars, but one should state it. The authors should preferably report a 2-sigma error bar than state that they have a 96\% CI, if the hypothesis of Normality of errors is not verified.
        \item For asymmetric distributions, the authors should be careful not to show in tables or figures symmetric error bars that would yield results that are out of range (e.g. negative error rates).
        \item If error bars are reported in tables or plots, The authors should explain in the text how they were calculated and reference the corresponding figures or tables in the text.
    \end{itemize}

\item {\bf Experiments compute resources}
    \item[] Question: For each experiment, does the paper provide sufficient information on the computer resources (type of compute workers, memory, time of execution) needed to reproduce the experiments?
    \item[] Answer: \answerNA{} 
    \item[] Justification: This paper does not include experiments.
    \item[] Guidelines:
    \begin{itemize}
        \item The answer NA means that the paper does not include experiments.
        \item The paper should indicate the type of compute workers CPU or GPU, internal cluster, or cloud provider, including relevant memory and storage.
        \item The paper should provide the amount of compute required for each of the individual experimental runs as well as estimate the total compute. 
        \item The paper should disclose whether the full research project required more compute than the experiments reported in the paper (e.g., preliminary or failed experiments that didn't make it into the paper). 
    \end{itemize}
    
\item {\bf Code of ethics}
    \item[] Question: Does the research conducted in the paper conform, in every respect, with the NeurIPS Code of Ethics \url{https://neurips.cc/public/EthicsGuidelines}?
    \item[] Answer: \answerYes{} 
    \item[] Justification: No human subjects or data was used. No clear potential exists for harmful consequences.
    \item[] Guidelines:
    \begin{itemize}
        \item The answer NA means that the authors have not reviewed the NeurIPS Code of Ethics.
        \item If the authors answer No, they should explain the special circumstances that require a deviation from the Code of Ethics.
        \item The authors should make sure to preserve anonymity (e.g., if there is a special consideration due to laws or regulations in their jurisdiction).
    \end{itemize}

\item {\bf Broader impacts}
    \item[] Question: Does the paper discuss both potential positive societal impacts and negative societal impacts of the work performed?
    \item[] Answer: \answerNo{} 
    \item[] Justification: This paper provides theoretical comparison of several widely-used LLM architectures. The goal is to achieve better scientific understanding of these methods. While LLMs have broad societal impacts, the results of this paper do not directly relate to this impact.
    \item[] Guidelines:
    \begin{itemize}
        \item The answer NA means that there is no societal impact of the work performed.
        \item If the authors answer NA or No, they should explain why their work has no societal impact or why the paper does not address societal impact.
        \item Examples of negative societal impacts include potential malicious or unintended uses (e.g., disinformation, generating fake profiles, surveillance), fairness considerations (e.g., deployment of technologies that could make decisions that unfairly impact specific groups), privacy considerations, and security considerations.
        \item The conference expects that many papers will be foundational research and not tied to particular applications, let alone deployments. However, if there is a direct path to any negative applications, the authors should point it out. For example, it is legitimate to point out that an improvement in the quality of generative models could be used to generate deepfakes for disinformation. On the other hand, it is not needed to point out that a generic algorithm for optimizing neural networks could enable people to train models that generate Deepfakes faster.
        \item The authors should consider possible harms that could arise when the technology is being used as intended and functioning correctly, harms that could arise when the technology is being used as intended but gives incorrect results, and harms following from (intentional or unintentional) misuse of the technology.
        \item If there are negative societal impacts, the authors could also discuss possible mitigation strategies (e.g., gated release of models, providing defenses in addition to attacks, mechanisms for monitoring misuse, mechanisms to monitor how a system learns from feedback over time, improving the efficiency and accessibility of ML).
    \end{itemize}
    
\item {\bf Safeguards}
    \item[] Question: Does the paper describe safeguards that have been put in place for responsible release of data or models that have a high risk for misuse (e.g., pretrained language models, image generators, or scraped datasets)?
    \item[] Answer: \answerNA{} 
    \item[] Justification: No data or models are released.
    \item[] Guidelines:
    \begin{itemize}
        \item The answer NA means that the paper poses no such risks.
        \item Released models that have a high risk for misuse or dual-use should be released with necessary safeguards to allow for controlled use of the model, for example by requiring that users adhere to usage guidelines or restrictions to access the model or implementing safety filters. 
        \item Datasets that have been scraped from the Internet could pose safety risks. The authors should describe how they avoided releasing unsafe images.
        \item We recognize that providing effective safeguards is challenging, and many papers do not require this, but we encourage authors to take this into account and make a best faith effort.
    \end{itemize}

\item {\bf Licenses for existing assets}
    \item[] Question: Are the creators or original owners of assets (e.g., code, data, models), used in the paper, properly credited and are the license and terms of use explicitly mentioned and properly respected?
    \item[] Answer: \answerNA{} 
    \item[] Justification: No such existing assets are used.
    \item[] Guidelines:
    \begin{itemize}
        \item The answer NA means that the paper does not use existing assets.
        \item The authors should cite the original paper that produced the code package or dataset.
        \item The authors should state which version of the asset is used and, if possible, include a URL.
        \item The name of the license (e.g., CC-BY 4.0) should be included for each asset.
        \item For scraped data from a particular source (e.g., website), the copyright and terms of service of that source should be provided.
        \item If assets are released, the license, copyright information, and terms of use in the package should be provided. For popular datasets, \url{paperswithcode.com/datasets} has curated licenses for some datasets. Their licensing guide can help determine the license of a dataset.
        \item For existing datasets that are re-packaged, both the original license and the license of the derived asset (if it has changed) should be provided.
        \item If this information is not available online, the authors are encouraged to reach out to the asset's creators.
    \end{itemize}

\item {\bf New assets}
    \item[] Question: Are new assets introduced in the paper well documented and is the documentation provided alongside the assets?
    \item[] Answer: \answerNA{} 
    \item[] Justification: No new assets are released.
    \item[] Guidelines:
    \begin{itemize}
        \item The answer NA means that the paper does not release new assets.
        \item Researchers should communicate the details of the dataset/code/model as part of their submissions via structured templates. This includes details about training, license, limitations, etc. 
        \item The paper should discuss whether and how consent was obtained from people whose asset is used.
        \item At submission time, remember to anonymize your assets (if applicable). You can either create an anonymized URL or include an anonymized zip file.
    \end{itemize}

\item {\bf Crowdsourcing and research with human subjects}
    \item[] Question: For crowdsourcing experiments and research with human subjects, does the paper include the full text of instructions given to participants and screenshots, if applicable, as well as details about compensation (if any)? 
    \item[] Answer: \answerNA{} 
    \item[] Justification: No research with human subjects was performed.
    \item[] Guidelines:
    \begin{itemize}
        \item The answer NA means that the paper does not involve crowdsourcing nor research with human subjects.
        \item Including this information in the supplemental material is fine, but if the main contribution of the paper involves human subjects, then as much detail as possible should be included in the main paper. 
        \item According to the NeurIPS Code of Ethics, workers involved in data collection, curation, or other labor should be paid at least the minimum wage in the country of the data collector. 
    \end{itemize}

\item {\bf Institutional review board (IRB) approvals or equivalent for research with human subjects}
    \item[] Question: Does the paper describe potential risks incurred by study participants, whether such risks were disclosed to the subjects, and whether Institutional Review Board (IRB) approvals (or an equivalent approval/review based on the requirements of your country or institution) were obtained?
    \item[] Answer: \answerNA{} 
    \item[] Justification: No research with human subjects was performed.
    \item[] Guidelines:
    \begin{itemize}
        \item The answer NA means that the paper does not involve crowdsourcing nor research with human subjects.
        \item Depending on the country in which research is conducted, IRB approval (or equivalent) may be required for any human subjects research. If you obtained IRB approval, you should clearly state this in the paper. 
        \item We recognize that the procedures for this may vary significantly between institutions and locations, and we expect authors to adhere to the NeurIPS Code of Ethics and the guidelines for their institution. 
        \item For initial submissions, do not include any information that would break anonymity (if applicable), such as the institution conducting the review.
    \end{itemize}

\item {\bf Declaration of LLM usage}
    \item[] Question: Does the paper describe the usage of LLMs if it is an important, original, or non-standard component of the core methods in this research? Note that if the LLM is used only for writing, editing, or formatting purposes and does not impact the core methodology, scientific rigorousness, or originality of the research, declaration is not required.
    \item[] Answer: \answerNA{} 
    \item[] Justification: LLMs were not used in any meaningful way.
    \item[] Guidelines:
    \begin{itemize}
        \item The answer NA means that the core method development in this research does not involve LLMs as any important, original, or non-standard components.
        \item Please refer to our LLM policy (\url{https://neurips.cc/Conferences/2025/LLM}) for what should or should not be described.
    \end{itemize}

\end{enumerate}

\end{document}